\documentclass[11pt]{article}
\usepackage{custom}

\usepackage{wrapfig}
\usepackage[table]{xcolor}
\usepackage{colortbl} 

\usepackage{booktabs} 
\usepackage{subcaption}
\captionsetup[subtable]{skip=6pt}

\title{Singleton-Optimized Conformal Prediction}
\author{
  Tao Wang$^{1}$ \quad
  Yan Sun$^{2}$ \quad
  Edgar Dobriban$^{1}$ \\
  $^{1}$University of Pennsylvania \quad
  $^{2}$New Jersey Institute of Technology
}

\begin{document}

\maketitle

\footnotetext[1]{\texttt{tawan@wharton.upenn.edu}, \texttt{dobriban@wharton.upenn.edu}}
\footnotetext[2]{\texttt{yan.sun@njit.edu}}

\begin{abstract}
Conformal prediction can be used to construct prediction sets that cover the true outcome with a desired probability, but can sometimes lead to large prediction sets that are costly in practice. The most useful outcome is a singleton prediction---an unambiguous decision---yet existing efficiency-oriented methods primarily optimize average set size. Motivated by this, 
we propose a new nonconformity score that aims to minimize the probability of producing non-singleton sets. Starting from a non-convex constrained optimization problem as a motivation, we provide a geometric reformulation and associated algorithm for computing the nonconformity score and associated split conformal prediction sets in $O(K)$ time for $K$-class problems.
Using this score in split conformal prediction leads to our proposed Singleton-Optimized Conformal Prediction (SOCOP) method.
We evaluate our method in experiments on image classification and LLM multiple-choice question-answering, comparing with standard nonconformity scores such as the (negative) label probability estimates and their cumulative distribution function; both of which are motivated by optimizing length.
The results  
show that SOCOP increases singleton frequency (sometimes by over 20\%) compared to the above scores, with minimal impact on average set size.
\end{abstract}

\tableofcontents

\section{Introduction}
\label{sec:introduction}
Reliable uncertainty quantification is often needed for deploying predictive models in settings of importance. 
While standard single point predictions can be very useful if models are accurate, they can be problematic if model accuracy drops. 
Prediction sets address this limitation by providing a subset of possible labels, $C(x) \subseteq \mathcal{Y}$, for a given input $x\in\mathcal{X}$. The primary requirement for such sets is usually a form of \emph{coverage}. Formally, given features $X \in \mathcal{X}$ with some distribution, and
a multi-class label $Y \in \mathcal{Y}$, we seek sets $C(X) \subseteq \mathcal{Y}$ satisfying the marginal coverage guarantee
$
\mathbb{P}\{Y \in C(X)\} \geq 1-\alpha.
$
Conformal prediction \citep[see e.g.,][etc]{vovk1999machine,gammerman1998learning,vovk2005algorithmic} offers a methodology for constructing prediction sets that satisfy this guarantee under the mild assumption of data exchangeability.

While validity is essential, the practical utility of a prediction set is determined by its \emph{efficiency}.
For instance, a trivial set containing all labels is valid but uninformative. 
In practice, efficiency is often evaluated by the expected size of the sets $\mathbb{E}_X[|C(X)|]$.
A variety of works have studied how to achieve small sets on average, ranging from choosing suitable nonconformity scores to explicit optimization approaches \citep[see e.g.,][etc]{takeuchi2020contributions,sadinle2019least, romano2020classification, angelopoulos2021uncertainty, kiyani2024length}.

However, average size 
is not necessarily the ideal measure of efficiency.
Often, the most desirable outcome is an unambiguous prediction, a \emph{singleton set} containing only one label. 
A set of size two or more may require additional human intervention or changing the workflow when used in downstream analysis, and thus brings an outsized cost.
This motivates an alternative efficiency criterion, first conceptualized in \cite{vovk2005algorithmic} as the M-criterion, which seeks to minimize the probability of producing a non-singleton set, $\mathbb{P}_X[|C(X)| > 1]$. We refer to this as the \emph{singleton objective}. \footnote{ Strictly speaking, a singleton set refers to a cardinality of exactly one ($|C(X)|=1$). In this work, we use the term ``singleton objective'' to broadly refer to the goal of minimizing the probability of returning multiple labels ($|C(X)| > 1$). As we discuss below in our experiments, zero sets occur extremely rarely, and so the two objectives effectively coincide.}
To our knowledge, practical conformal prediction methods that aim to optimize the singleton objective have not yet been developed.

In this work, we bridge this gap by developing conformal prediction sets motivated by optimizing a combination of the singleton objective and the expected length for classification problems, subject to coverage. 
We begin by formulating
this as an optimization problem 
over prediction sets 
(which are discrete variables). 
Our main contributions are then as follows:
\begin{enumerate}
\item \textbf{Nonconformity score inspired by singleton objective.} 
We use the singleton objective as inspiration to define a nonconformity score aiming to enhance singleton probability. 
Since the original optimization problem is constrained, we consider its Lagrangian, which we show is separable across $x$.
We show that for each fixed $x$, the optimal prediction set is the set of top-few labels, and that the prediction sets are nested as the Lagrangian penalty parameter increases.
This motivates us to define a nonconformity score based on nested conformal prediction \citep{vovk2005algorithmic,gupta2022nested}.

    \item \textbf{Efficient algorithm to compute nonconformity score:} 
    We derive a highly efficient algorithm to compute the nonconformity score,
    through a geometric perspective.
    We show that this problem reduces to finding the lower convex hull of a set of $K$ two-dimensional points for $K$-class classification problems, which has $O(K)$ complexity per instance. We show that split conformal prediction sets can be computed with the same complexity.
    \item \textbf{Empirical validation:} We conduct detailed experiments on three
    image classification datasets (two versions of ImageNet and TissueMNIST) and LLM multiple-choice question answering. 
    The results demonstrate that our method, which we call Singleton-Optimized Conformal Prediction (SOCOP), achieves a favorable balance between minimizing average set size and maximizing the frequency of singleton predictions compared to state-of-the-art baselines. Often, we can reduce the non-singleton probability by a large fraction (such as 20\%) while only incurring a small increase in expected set size. \footnote{Code is available at \url{https://github.com/TaoWangPenn/Singleton-Optimized-Conformal-Prediction}.}
\end{enumerate}

\paragraph{Notation.} For a positive integer $K$, we denote $[K] := \{1, \dots, K\}$. We denote the $(K-1)$-dimensional simplex of probabilities by $\smash{\Delta_{K-1} := \{(z_1,\dots,z_K): \sum_{i=1}^{K}z_i=1\}}$. For a finite set $A$, we write $|A|$ for its cardinality.
The indicator of a set $A$ is denoted by $I(A)$.

\subsection{Related Work}

The origins of distribution-free prediction sets date back to the early works of \cite{Wilks1941}, \cite{Wald1943}, \cite{scheffe1945non}, and \cite{tukey1947non,tukey1948nonparametric}.
Distribution-free inference and conformal prediction has been extensively studied in recent works \citep[see, e.g.,][etc]{saunders1999transduction,vovk1999machine,papadopoulos2002inductive,vovk2005algorithmic,Vovk2013, lei2013distribution,lei2014distribution,lei2018distribution,romano2020classification}. 
Overviews of the field are provided by \cite{vovk2005algorithmic, shafer2008tutorial}, and \cite{angelopoulos2023conformal}. 

Recent research has started investigating ways to improve the efficiency of prediction sets. 
\citep{sadinle2019least} have
shown that the true probability of the labels given the features is the conformity score that leads to prediction sets that minimize expected length.
Adaptive scoring schemes \citep{romano2020classification,angelopoulos2021uncertainty} have a similar motivation, but are derived from a conditional coverage perspective.
 These works are related to ours in that we also derive a new nonconformity score. However, taking into account the singleton probability or M-criterion \citep{vovk2005algorithmic}, our work requires addressing new technical challenges in terms of efficiently computing the prediction sets.
Recent work aims to directly optimize the length, possibly with conditional coverage guarantees \citep{kiyani2024length}.
Other work has explored different notions of efficiency, through direct optimization \citep{stutz2022learning,shi2025direct}, computational shortcuts 
\citep{liang2023conformal}, or other approaches, see e.g., \cite{liang2025conformalpredictiondatadependentmodel,bars2025on,braun2025minimum,behboodi2025fundamental}, etc.
Additional related work is discussed in  \Cref{app-sec:related-work}.

\section{A Singleton-Optimized Nonconformity Score}
\label{sec:Deterministic-Optimization}
We consider a classification problem with labels 
$y \in \mathcal{Y} =\{1, \dots, K\}$ and features $x \in \mathcal{X} =\mathbb{R}^{d}$. Our goal is to construct prediction sets $C(x)$, for all $x$, satisfying the coverage guarantee $P(Y \in C(X)) \ge 1-\alpha$. 
Let $\mathcal{M}$ be the collection of all\footnote{We will endow $\mathcal{X}$ with the Borel $\sigma$-algebra. All quantities considered in this paper will be measurable with respect to appropriate $\sigma$-algebras; this will not be mentioned further.} (measurable) prediction sets $C:\mathcal{X} \to 2^\mathcal{Y}$.
Our motivating problem is to find prediction sets that are optimal with respect to a linear combination of the singleton objective and length, subject to coverage: 
$$
\begin{aligned}
&\min_{C \in \mathcal{M}}
&&F_\lambda(C):=
\mathbb{P}_{X} \left[ |C(X)| > 1 \right]
+\lambda  \mathbb{E}_{X} \left[ |C(X)| \right]\\
&\textnormal{s.t.}
&&G(C):=\mathbb{P}(Y \in C(X)) - (1-\alpha)\ge0.
\end{aligned}
$$
where $\lambda\ge 0$ is a regularization parameter that we will set later.
This objective balances the probability of non-singletons $\mathbb{P}_{X}\left[ |C(X)| > 1 \right]$ and the expected size $\mathbb{E}_{X} \left[ |C(X)| \right]$. 
We will argue that this leads to a favorable trade-off, whereby increasing one by a small amount results in a large decrease in the other. 

This optimization problem is defined over prediction sets, which belong to a discrete, discontinuous space (e.g., the linear combination of two sets is undefined), and so standard gradient-based optimization methods are not applicable.
However, we emphasize that this problem will merely serve as a motivation for us to define a useful nonconformity score.
 We will not attempt to solve this problem exactly, but rather use it as a starting point, transforming it into a form that allows us to derive our nonconformity score. 

Our first step towards defining the nonconformity score is to study the dual of the above problem. 
This will allow us to use separability in the solution, and thus derive a nonconformity score.
Let $P_X$ be the distribution of $X$.
The Lagrangian with dual variable $\eta\ge0$ is:
\begin{equation}\label{lag}
   \mathcal{L}_\lambda(C,\eta)
   =\int_{\mathcal{X}}\Bigl[I(|C(x)|>1)+\lambda|C(x)|
   -\eta\sum_{y\in C(x) }P_{Y|X}(y|x)\Bigr]
   P_X(\mathrm{d} x)+\eta(1-\alpha).
\end{equation}
Since $\mathcal{L}_\lambda(C,\eta)=F_\lambda(C)-\eta G(C)\le F_\lambda(C)$ for every feasible $C$ and $\eta\ge 0$, 
minimizing $\mathcal{L}_\lambda(C,\eta)$ gives a lower bound on the original problem.\footnote{An optimal solution $C^*$ to this problem minimizes the original objective $F$ subject to the constraint $\mathbb{P}(Y \in C(X))=\mathbb{P}(Y \in C^*(X))$. For this reason, it would be reasonable to consider the original optimization problem subject to the constraint $G(C)=0$, in which case, the Lagrange multiplier approach could provide a certificate of optimality or near-optimality quite directly. However, ultimately, we will not solve the above problem directly but rather only use it as a way to define a nonconformity score, which we will then use in conformal prediction. Therefore, certifying the optimality of our intermediate solution to the original optimization problem is not a central goal of our research.}

A key observation is that the minimization of $\mathcal{L}_\lambda(C,\eta)$ over $C$ is separable in $x$, i.e., it can be solved by optimizing over each $x$ separately.
Denote, for all $x\in\mathcal{X}$, the per-instance loss
\begin{align*}
\ell_{p(\cdot|x),\lambda}\left(C(x) ; \eta\right) =I(|C(x)|>1) +\lambda|C(x)| -\eta\sum_{y \in C(x)} p(y|x).
\end{align*}
Then, we can write $\mathcal{L}_\lambda(C,\eta)  =\int_{\mathcal{X}}  \ell_{p(\cdot|x),\lambda}\left(C(x) ; \eta\right)P_X(\mathrm{d} x)+\eta(1-\alpha)$ as an integral of the per-instance loss.
Thus, \eqref{lag} can be minimized over $C \in\mathcal{M}$ by  minimizing $\ell_{p(\cdot|x),\lambda}\left(C(x) ; \eta\right)$ for each $x \in \mathcal{X}$ separately.
  Since $\ell_{p(\cdot|x),\lambda}\left(C(x) ; \eta\right)$ can be viewed as an instance-level cost associated with the prediction set $C(x)$ and the probabilities of the labels $p(\cdot|x)$, this motivates us to leverage it to construct our nonconformity score. 

Continuing with the general approach of leveraging the theoretically optimal prediction set for the construction of the nonconformity score, 
we study the minimization of $\ell$.
For any probability distribution $\gamma \in \Delta_{K-1}$,
and Lagrange multiplier $\eta \ge 0$, we consider solving for the following \emph{singleton-optimized set} $S_{\eta,\gamma} \subseteq [K]$, defined by the optimization problem\footnote{    If there are multiple solutions, we choose any set that has a minimal size. The same holds for the definitions in the following text. Our claims will hold for all optimizing sets, and for simplicity we will refer to "the" optimizer.}
\begin{equation}
    \label{eq:opt_original}
    S_{\eta,\gamma}:= S_{\eta,\gamma,\lambda} \in \arg \min _{S \subseteq \mathcal{Y}} \ell_{\gamma, \lambda}(S;\eta).
\end{equation}
Then, all solutions of minimizing \eqref{lag} can be  written\footnote{When the value of $\lambda$ is fixed or clear from the context, we will often omit it from our notation.} as
$C_{\eta}(x):=S_{\eta,p(\cdot|x)}$.

From now on, without loss of generality, we order the probabilities such that $\gamma_{y_1} \geq \gamma_{y_2}\geq \cdots\geq \gamma_{y_K}>0$, where $K=|\mathcal{Y}|$. 
Fortunately, the structure of the prediction sets $S_{\eta,\gamma}$ can be characterized. 
A starting point is the following simple result; whose proof (with all proofs) is provided in the appendix.
For any $j \in \{0, 1, \dots,K\}$, let $\mathcal{F}_j$ denote a set of the top $j$ labels, breaking ties arbitrarily; where $\mathcal{F}_0$ is the empty set.
\begin{lemma}[The structure of singleton optimal sets]
\label{lem:structure_of_C_mu}
For any $\eta \ge0 $ and $\gamma \in \Delta_{K-1}$,
    $S_{\eta,\gamma}$ is the set of top-$j$ labels for some $j$ that depends on $\eta$ and $\gamma$.    
\end{lemma}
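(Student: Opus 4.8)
The plan is to show that the optimal set must consist of the labels with the largest probabilities by a simple exchange argument. Fix $\eta \ge 0$ and $\gamma \in \Delta_{K-1}$, and let $S$ be any minimizer of $\ell_{\gamma,\lambda}(\cdot\,;\eta)$. Write the loss as $\ell_{\gamma,\lambda}(S;\eta) = I(|S|>1) + \lambda|S| - \eta \sum_{y\in S}\gamma_y$. Observe that the first two terms, $I(|S|>1)$ and $\lambda|S|$, depend on $S$ only through its cardinality $|S|$. Hence, among all sets of a fixed size $j$, the loss is minimized precisely by maximizing $\sum_{y\in S}\gamma_y$, which is achieved by taking the $j$ labels with the largest values of $\gamma$, i.e., by $\mathcal{F}_j$.

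More carefully, I would argue as follows. Suppose $S$ is a minimizer with $|S| = j$, but $S \neq \mathcal{F}_j$ (as a choice of top-$j$ labels). Then there exist $a \in \mathcal{F}_j \setminus S$ and $b \in S \setminus \mathcal{F}_j$ with $\gamma_a \ge \gamma_b$. Replacing $b$ by $a$ yields a set $S'$ of the same cardinality with $\sum_{y \in S'}\gamma_y = \sum_{y\in S}\gamma_y + (\gamma_a - \gamma_b) \ge \sum_{y\in S}\gamma_y$, so $\ell_{\gamma,\lambda}(S';\eta) \le \ell_{\gamma,\lambda}(S;\eta)$, and thus $S'$ is also a minimizer. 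Iterating this swap (a finite number of times) produces the top-$j$ set $\mathcal{F}_j$ as a minimizer of the same size. Since the overall minimum of $\ell_{\gamma,\lambda}$ over all $S \subseteq \mathcal{Y}$ is attained by minimizing first over each size $j \in \{0,1,\dots,K\}$ and then over $j$, the claim follows: there is some $j$ for which $\mathcal{F}_j$ attains the minimum, and by the tie-breaking convention (choosing a minimal-size optimizer) $S_{\eta,\gamma}$ is a top-$j$ set.

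I do not anticipate a serious obstacle here; the main thing to be careful about is the tie-breaking and the ``without loss of generality'' ordering. Because $\mathcal{F}_j$ is only defined up to breaking ties among equal probabilities, and because $S_{\eta,\gamma}$ is defined as \emph{any} minimal-size minimizer, the statement should be read as: $S_{\eta,\gamma}$ equals \emph{some} valid choice of top-$j$ labels. The exchange argument above shows that whatever $S_{\eta,\gamma}$ is, it has the same size and same $\gamma$-mass as $\mathcal{F}_{|S_{\eta,\gamma}|}$, so it is itself a legitimate top-$|S_{\eta,\gamma}|$ set. One minor point to state cleanly is that we only need to consider $j$ in a finite range and that the minimum is attained (which is automatic since $\mathcal{Y}$ is finite, so $2^{\mathcal{Y}}$ is finite). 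No continuity or compactness subtleties arise.
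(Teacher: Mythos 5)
Your proof is correct and rests on the same underlying exchange idea as the paper's. The paper argues by contradiction: if $S_{\eta,\gamma}$ contained a label $y_{i_1}$ while excluding some $y_{i_2}$ with strictly larger probability, the single swap $S'=(S_{\eta,\gamma}\setminus\{y_{i_1}\})\cup\{y_{i_2}\}$ would strictly reduce the loss, contradicting optimality. Your version packages the same observation slightly more directly — noting that $I(|S|>1)+\lambda|S|$ depends only on $|S|$, so that for each fixed cardinality $j$ the loss is minimized by maximizing $\sum_{y\in S}\gamma_y$, i.e., by a top-$j$ set — and then optimizes over $j$. Both are valid; yours perhaps makes more explicit why the reduction to optimizing over $j$ alone is lossless. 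Your last paragraph's care about the tie-breaking convention and about why $S_{\eta,\gamma}$ (and not merely some minimizer) is a top-$j$ set is appropriate, though you might note explicitly that for $\eta>0$ the equality $\ell(\mathcal{F}_j;\eta)=\ell(S_{\eta,\gamma};\eta)$ forces equal $\gamma$-mass, while for $\eta=0$ the claim is immediate since every minimal-size minimizer is $\emptyset=\mathcal{F}_0$.
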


The next and crucial observation
is that the sets $S_{\eta,\gamma}$
from \eqref{eq:opt_original}
are \emph{nested} as a function of the Lagrange multiplier $\eta$.
\begin{lemma}[Nested Sets Property]
\label{lem:nested_sets_C_mu}
    For $0\le\eta_1<\eta_2$, we have $S_{\eta_1,\gamma}\subseteq S_{\eta_2,\gamma}$.
\end{lemma}
This motivates us to define a nonconformity score via nested conformal prediction \citep{vovk2005algorithmic, gupta2022nested},
 where we aim to find the smallest $\eta$---and thus the smallest set $S_{\eta,\gamma}$---that contains the true label.

In practice, the true conditional probability $p(\cdot|x)$ is typically unknown;
and instead, we only have access to an estimated probability $\hat{p}(\cdot|x)$. 
By plugging in the estimated probabilities in lieu of the true ones and using nested conformal prediction \citep{vovk2005algorithmic, gupta2022nested}, 
we define the singleton-optimized nonconformity score:
\begin{definition}[Singleton-optimized nonconformity score]
For an input $x\in \mathcal{X}$ with label $y\in \mathcal{Y}$, for a probabilistic predictor $\hat p$ such that $\hat{p}(\cdot \mid x)$ is a probability distribution over $\mathcal{Y}$, and a regularization parameter $\lambda \ge 0$, define the 
\emph{singleton-optimized nonconformity score}
\begin{equation}
\label{eq:nonconformity-score-def}
r(x, y) := r_\lambda(x, y) =  \inf \left\{ \tau \geq 0 : y \in S_{\tau, \hat{p}(\cdot \mid x),\lambda} \right\}.
\end{equation}    
where the singleton-optimal set $S_{\eta,\gamma,\lambda}$ is defined in \eqref{eq:opt_original} for a Lagrange multiplier $\eta\ge 0$.
\end{definition}
In principle, this nonconformity score can be used with a variety of techniques from conformal prediction, including split conformal prediction \citep{papadopoulos2002inductive}, cross-conformal prediction \citep{vovk2015cross}, Mondrian and label conditional conformal prediction \citep{vovk2005algorithmic}, etc, to construct prediction sets. 
The method of choice depends on the type of data and guarantee desired. 
However, the practical use of the nonconformity score first requires an efficient algorithm to compute it.
As we will see below, a naive search over $\tau$ can be expensive when the number of classes is large. 
In what follows, we discuss how to compute the nonconformity score $r$ efficiently. 
Readers more interested in experimental results may skip to Section \ref{exp}.

\subsection{Geometric approach to computing the nonconformity score}
\label{geo}

In order to develop a method to compute the nonconformity score, we first study the problem of computing the prediction set $S_{\eta,\gamma}$ 
for a
given vector of probabilities
$\gamma\in \Delta_{K-1}$.
This is used directly in the nonconformity score. 
By Lemma \ref{lem:structure_of_C_mu}, the optimal prediction sets from  \eqref{eq:opt_original} are equal to the top few labels.
Specifically,  
$S_{\eta,\gamma} =\mathcal{F}_{\kappa(\eta;\gamma)}$, where $\kappa(\eta;\gamma)$  is the \emph{optimal subset size} (or \emph{optimal index}), defined via the optimization problem:
\vspace{-1em}
\begin{equation}
    \label{eq:opt_reduced}
    \kappa(\eta;\gamma) :=\arg \min_{0\leq k\leq K} \big\{\Psi_\eta(k,\gamma) :=I(k>1)+\lambda k-\eta\cdot\sum_{i=1}^k \gamma_{y_i}\big\}.
\end{equation}
\vspace{-1em}

For a fixed value of $\eta$, the optimal index $\kappa(\eta;\gamma)$ can be found in time $O(K)$ by observing that, 
for $k\ge 3$, 
that the gaps $\delta_k:=\Psi_\eta(k,\gamma)-\Psi_\eta(k-1,\gamma) =\lambda - \eta\gamma_{y_k}$
are non-decreasing in $k$
due to the ordering $\gamma_{y_1} \ge \gamma_{y_2} \ge \ldots$. 
Hence, to find the optimum, it is enough to find the smallest index $k^\ast \ge 3$ such that
\(
\delta_{k^\ast} < 0 \le \delta_{k^\ast+1},
\)
if such an index exists; otherwise setting $k^\ast = K$.
Then, we compare the objective value at $k^*$ with those for $k=0,1,2$ and choose the best. This immediately leads to an $O(K)$ algorithm for computing the prediction set $S_{\eta,\gamma}$.

Next, 
by leveraging the reduced problem \eqref{eq:opt_reduced},
the nonconformity score in \eqref{eq:nonconformity-score-def}
can be equivalently written as:
\begin{equation}
    \label{eq:nonconformity-score-def-2}
    r(x, y_i)=\inf \{\tau \geqslant 0: \kappa\left(\tau;\hat{p}(\cdot|x)\right) \geqslant i\}.
\end{equation}
A direct approach might be to search over values of $\tau$, checking $\kappa\left(\tau;\hat{p}(\cdot|x)\right) \geqslant i$ for each case, until we find a value that approximates the true value within a certain desired accuracy.
However, this direct approach becomes computationally challenging for large values of $K$, because computing the optimal index takes linear time $O(K)$ for each $\tau$.
Therefore
we propose a fast alternative computational method, which relies on studying the optimal index for different values of $\tau$ simultaneously, and can be viewed through a geometric perspective.

A first step observation is that 
the nested sets property immediately implies that $\eta\mapsto\kappa(\eta;\gamma)$ is a monotone step function.
\begin{corollary}[Properties of the optimal index function]
\label{cor:property_k_mu}
For any $\gamma \in \Delta_{K-1}$, $\kappa(\cdot;\gamma):[0,\infty] \rightarrow \{0,1,...,K\}$ is a monotonically non-decreasing, left-continuous step function with $\kappa(0;\gamma)=0$ and $\kappa(\infty;\gamma)$ $:=$ $\lim_{\eta\to\infty}\kappa(\cdot;\gamma) $ $=K$.
\end{corollary}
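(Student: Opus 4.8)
The plan is to derive all four asserted properties of $\kappa(\cdot;\gamma)$ from the representation $S_{\eta,\gamma}=\mathcal{F}_{\kappa(\eta;\gamma)}$, the nested-sets property (Lemma~\ref{lem:nested_sets_C_mu}), and the per-index reformulation \eqref{eq:opt_reduced}. I would first establish monotonicity: for $\eta_1<\eta_2$, the nesting $S_{\eta_1,\gamma}\subseteq S_{\eta_2,\gamma}$ forces $|S_{\eta_1,\gamma}|\le|S_{\eta_2,\gamma}|$, and since $\kappa(\eta;\gamma)=|S_{\eta,\gamma}|$ this gives $\kappa(\eta_1;\gamma)\le\kappa(\eta_2;\gamma)$. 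A monotone non-decreasing function with values in the finite set $\{0,\dots,K\}$ is automatically a step function with at most $K$ jump points, which settles the step-function claim as well.

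Next I would pin down the boundary values by direct computation with $\Psi_\eta(k,\gamma)=I(k>1)+\lambda k-\eta\sum_{i=1}^k\gamma_{y_i}$. At $\eta=0$ we have $\Psi_0(k,\gamma)=I(k>1)+\lambda k\ge 0=\Psi_0(0,\gamma)$ for every $k$, so $k=0$ is a minimizer, and being the smallest index it is the chosen one, i.e.\ $\kappa(0;\gamma)=0$. For the limit at infinity, for each $k\le K-1$ I would write $\Psi_\eta(K,\gamma)-\Psi_\eta(k,\gamma)=\big(I(K>1)-I(k>1)\big)+\lambda(K-k)-\eta\big(1-\sum_{i=1}^k\gamma_{y_i}\big)$ and note that $\sum_{i=1}^k\gamma_{y_i}\le 1-\gamma_{y_K}<1$ since $\gamma_{y_K}>0$; hence the coefficient of $\eta$ is strictly negative and the difference tends to $-\infty$. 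So for all large enough $\eta$ the unique minimizer is $k=K$, giving $\kappa(\eta;\gamma)=K$ eventually and therefore $\kappa(\infty;\gamma)=K$.

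The last and subtler point is left-continuity. Left-continuity at $\eta_0=0$, and at $\eta=\infty$ where $\kappa(\infty;\gamma)$ is defined as the limit, is vacuous, so I would fix $\eta_0\in(0,\infty)$. By monotonicity the left limit $\ell:=\lim_{\eta\uparrow\eta_0}\kappa(\eta;\gamma)$ exists with $\ell\le\kappa(\eta_0;\gamma)=:k_0$, and the goal is $\ell=k_0$. Since $\kappa(\cdot;\gamma)$ is a step function with finitely many jumps, it equals $\ell$ on some interval $(\eta_0-\varepsilon,\eta_0)$; for such $\eta$, $\ell$ minimizes $k\mapsto\Psi_\eta(k,\gamma)$, so $\Psi_\eta(\ell,\gamma)\le\Psi_\eta(k_0,\gamma)$. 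Letting $\eta\uparrow\eta_0$ and using that $\eta\mapsto\Psi_\eta(k,\gamma)$ is affine, hence continuous, for each fixed $k$ yields $\Psi_{\eta_0}(\ell,\gamma)\le\Psi_{\eta_0}(k_0,\gamma)$. But $k_0=\kappa(\eta_0;\gamma)$ minimizes $\Psi_{\eta_0}(\cdot,\gamma)$, so $\ell$ is also a minimizer at $\eta_0$; were $\ell<k_0$, this would contradict $k_0$ being the minimal-index minimizer. Hence $\ell=k_0$, so $\kappa(\cdot;\gamma)$ is left-continuous.

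The step I expect to be the main obstacle is left-continuity: one must use the tie-breaking convention (taking the minimal-size, equivalently minimal-index, optimizer) in an essential way---this is precisely what makes $\kappa$ left-continuous rather than right-continuous---and one must handle the endpoints $\eta_0=0$ and $\eta=\infty$ separately. Monotonicity, the step-function structure, and the boundary values are routine once Lemmas~\ref{lem:structure_of_C_mu} and~\ref{lem:nested_sets_C_mu} are available.
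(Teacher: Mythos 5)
Your proof is correct and essentially completes what the paper leaves implicit. The paper asserts that the nested-sets lemma ``immediately implies'' the monotone step-function property and states the corollary without a dedicated proof; the boundary values and, more importantly, left-continuity are only established as a byproduct of the explicit formula in Theorem~\ref{thm:k_mu_properties_formal_revised}, which the paper proves later via the convex-conjugacy/lower-convex-hull machinery. Your approach is more elementary and self-contained: monotonicity from $|S_{\eta_1,\gamma}|\le|S_{\eta_2,\gamma}|$, the boundary values by direct inspection of $\Psi_\eta$ at $\eta=0$ and as $\eta\to\infty$ (where $\gamma_{y_K}>0$ is correctly used to ensure the coefficient of $\eta$ is strictly negative for $k<K$), and---most substantively---left-continuity by passing to the limit in $\Psi_\eta(\ell,\gamma)\le\Psi_\eta(k_0,\gamma)$ using affine continuity in $\eta$, then invoking the minimal-index tie-breaking rule to conclude $\ell=k_0$. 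You correctly identify that the tie-breaking convention is what makes $\kappa$ left- rather than right-continuous; this is exactly the role it plays in the paper's proof of Theorem~\ref{thm:k_mu_properties_formal_revised}, where ties at $\eta=\eta_i$ are resolved in favor of $v_{i-1}$. In short, you give a direct proof where the paper gives an indirect one, and your argument stands on its own without needing the geometric characterization.
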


Next, we aim to characterize the specific points where the jumps of $\kappa$ happen. 
Denote $\Gamma_k=\sum_{i=1}^k \gamma_{y_i}$ and $g_k=I(k>1)+\lambda k$ for conciseness. 
For each $k = 0,1, \ldots$,
we  consider the point $P_k=\left(\Gamma_k, g_k\right)$ in $ \mathbb{R}^2$. 
This yields a set of $K+1$ points $\mathcal{P}=\left\{P_0, \ldots, P_K\right\}$. Our algorithm will leverage the convex hull of $\mathcal{P}$, i.e.,  $\{\sum_{i=0}^{K} \beta_i P_i:\beta_i \ge 0, \sum_{i=0}^{K}\beta_i=1\}$,
which is a convex polygon in $\mathbb{R}^2$. 
The \emph{lower convex hull} is the lower boundary of this polygon, starting from $P_0=(0,0)$ to $P_K=(1,1+\lambda K)$.

Let the ordered sequence of vertices of the lower convex hull of $\mathcal{P}$ be $\{P_{v_0}, P_{v_1}, \dots, P_{v_m}\}$, where $v_0 < v_1 < \dots < v_m$ are indices from $\{0, \dots, K\}$. By construction, we have $v_0=0$ and $v_m=K$, since $\Gamma_k$ are strictly increasing  and $g_k$ are non-decreasing with $k$. For $i=1, \dots, m$, define the slope of the 
edge connecting 
the vertices $P_{v_{i-1}}$ and $P_{v_i}$ as $ \eta_i := (g_{v_i} - g_{v_{i-1}})/(\Gamma_{v_i} - \Gamma_{v_{i-1}}) $.
To unify the analysis, we define $\eta_0:=0$ and $\eta_{m+1}:=+\infty$. The following theorem (with proof in Appendix \ref{app:proof}) characterizes the the jumps and slopes of $\kappa$.
  Figure  \ref{fig:breg} shows an example of a
lower convex hull of a point set $\mathcal{P}$ for a probability vector with $K=10$.\footnote{Red points indicate the hull vertices, and $\eta_i$ denote the corresponding slopes. The nonconformity scores are $r(x,y_1)=\eta_1$, $r(x,y_2)=\cdots=r(x,y_7)=\eta_2$, $r(x,y_8)=\eta_3$, $r(x,y_9)=\eta_4$, and $r(x,y_{10})=\eta_5$.}

\begin{theorem}[Characterizing the optimal index function $\kappa$]
\label{thm:k_mu_properties_formal_revised}
The range of $\kappa(\eta;\gamma)$ for $\eta \in [0, \infty)$ is precisely the set $\{v_0, v_1, \dots, v_m\}$ of indices of the vertices of the lower convex hull.
Moreover, the discontinuity points of $\eta\mapsto\kappa(\eta;\gamma)$ are the slopes $\eta_i$, $i=1,\ldots,m$ of the edges of the vertices. Specifically,
\vspace{-1em}    $$
    \kappa(\eta;\gamma) = 
\begin{cases}
   0, & \textnormal{ for } \eta\in[0,\eta_1] \\
    v_i, & \textnormal{ for } \eta\in (\eta_i,\eta_{i+1}],\ 1\le i\le m-1\\
    K, & \textnormal{ for } \eta\in (\eta_m,\infty).
\end{cases}
    $$
\end{theorem}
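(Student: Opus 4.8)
The plan is to show that the function $\eta \mapsto \kappa(\eta;\gamma)$ is exactly the ``support function'' style index selector determined by the lower convex hull of $\mathcal{P}$. The starting observation is that $\Psi_\eta(k,\gamma) = g_k - \eta \Gamma_k$ is, for fixed $\eta$, an affine functional evaluated at the point $P_k = (\Gamma_k, g_k)$. Minimizing $\Psi_\eta(k,\gamma)$ over $k \in \{0,\dots,K\}$ is therefore the same as finding the point of $\mathcal{P}$ that is lowest in the direction $(-\eta, 1)$; equivalently, the point at which a line of slope $\eta$ (in the $(\Gamma, g)$-plane) first touches $\mathcal{P}$ from below as it is swept upward. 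A standard convexity fact is that such a minimizer can always be taken to be a vertex of the lower convex hull, and that as $\eta$ increases the touched vertex moves monotonically from $P_{v_0} = P_0$ toward $P_{v_m} = P_K$. This already gives the first claim: the range of $\kappa(\cdot;\gamma)$ over $\eta \in [0,\infty)$ is contained in $\{v_0,\dots,v_m\}$, and (by checking that each hull edge has a nonempty slope-interval) every $v_i$ is attained.

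Next I would pin down the switching points. For a fixed $\eta$, comparing two consecutive hull vertices $P_{v_{i-1}}$ and $P_{v_i}$: we have $\Psi_\eta(v_i,\gamma) \le \Psi_\eta(v_{i-1},\gamma)$ if and only if $g_{v_i} - g_{v_{i-1}} \le \eta (\Gamma_{v_i} - \Gamma_{v_{i-1}})$, i.e. if and only if $\eta \ge \eta_i$, where $\eta_i$ is precisely the edge slope defined in the statement. Because the edge slopes are strictly increasing along the lower convex hull ($\eta_1 < \eta_2 < \dots < \eta_m$ by definition of ``lower convex hull''), for $\eta \in (\eta_i, \eta_{i+1}]$ the vertex $P_{v_i}$ strictly beats its left neighbor and weakly beats its right neighbor, and by transitivity along the chain it beats every other hull vertex, hence every point of $\mathcal{P}$. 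This identifies $\kappa(\eta;\gamma) = v_i$ on $(\eta_i, \eta_{i+1}]$, with the boundary cases $\eta \in [0,\eta_1]$ giving $\kappa = v_0 = 0$ and $\eta \in (\eta_m, \infty)$ giving $\kappa = v_m = K$, matching the displayed piecewise formula. The half-open intervals $(\eta_i,\eta_{i+1}]$ are consistent with the left-continuity and the tie-breaking-toward-smaller-size convention already fixed in the paper (at $\eta = \eta_i$ exactly, $P_{v_{i-1}}$ and $P_{v_i}$ tie, and we take the smaller index $v_{i-1}$), and they confirm that the discontinuity points are exactly $\eta_1,\dots,\eta_m$.

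One bookkeeping point to handle carefully is the interaction with Lemma~\ref{lem:structure_of_C_mu} and the indicator term $I(k>1)$: the map $k \mapsto \Psi_\eta(k,\gamma)$ is a priori only defined on integers $k \in \{0,\dots,K\}$, and $g_k = I(k>1) + \lambda k$ is not affine in $k$ (it jumps at $k=1$), so the ``convexity in $k$'' is not automatic — it is genuinely a statement about the two-dimensional point cloud $\mathcal{P}$, not about a one-dimensional convex function. This is why the argument must go through the lower convex hull of $\mathcal{P}$ in $\mathbb{R}^2$ rather than through discrete convexity of $\Psi_\eta(\cdot,\gamma)$; I would make sure the writeup does not accidentally assume the latter. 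I expect this — cleanly separating ``$\arg\min$ over the finite set $\mathcal{P}$'' from ``$\arg\min$ over the hull vertices'' and justifying that a minimizer may always be chosen at a hull vertex even when interior points of $\mathcal{P}$ lie on or above the hull — to be the main obstacle, though it is a routine convex-geometry fact once stated precisely. Everything else is the direct slope comparison above, together with Corollary~\ref{cor:property_k_mu} (which already guarantees monotonicity, left-continuity, and the endpoint values, and so can be cited to avoid re-deriving them).
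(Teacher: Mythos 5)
Your proposal is correct, and it reaches the theorem by a genuinely different route from the paper. You argue directly in the primal picture: compare $\Psi_\eta$ at consecutive hull vertices, observe that $\Psi_\eta(v_i)\le\Psi_\eta(v_{i-1})$ iff $\eta\ge\eta_i$, and then propagate by transitivity along the chain of strictly increasing edge slopes to conclude that $v_i$ dominates every other hull vertex on $(\eta_i,\eta_{i+1}]$. The paper instead sets up a formal Fenchel conjugacy: it defines the primal function $\phi$ supported on $\{\Gamma_k\}$, identifies $\phi^* = -\ell_\gamma^*$, invokes Fenchel--Moreau--Rockafellar to identify $\phi^{**}$ with the lower convex hull, and then proves two dedicated lemmas (Lemma~\ref{lem:optimality_on_hull}, which says an index is optimal for some $\eta$ iff $P_k$ is on the hull, and Lemma~\ref{lem:unique_optimality}, which says $v_i$ is the \emph{unique} minimizer on the open interval). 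Your approach is more elementary and essentially collapses those two lemmas into a single slope-chaining argument; the paper's is more structured and explicitly exposes the duality between the envelope picture in $(\eta,\ell)$ and the hull picture in $(\Gamma,g)$, which it also uses to motivate the algorithm. You also correctly flag the one real gap your sketch leaves: passing from ``beats every hull vertex'' to ``beats every point of $\mathcal{P}$'' requires the standard fact that a linear functional's minimum over a finite point set is attained at a vertex of its convex hull — concretely, if $P_k$ is strictly above the hull, writing $\phi^{**}(\Gamma_k)$ as a convex combination $\sum_j\beta_j g_{v_j}$ with $\sum_j\beta_j\Gamma_{v_j}=\Gamma_k$ gives $\Psi_\eta(k) > \sum_j\beta_j\Psi_\eta(v_j)\ge\min_j\Psi_\eta(v_j)$. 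In the paper this is subsumed by the forward direction of Lemma~\ref{lem:optimality_on_hull}. Your handling of the ties at $\eta=\eta_i$ via the minimal-size tie-break and your appeal to Corollary~\ref{cor:property_k_mu} for left-continuity are consistent with the paper's treatment.
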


{\bf Computing the nonconformity score. } 
With Theorem \ref{thm:k_mu_properties_formal_revised}, we can efficiently compute the 

\begin{figure}
    \centering    \includegraphics[width=0.5\textwidth]{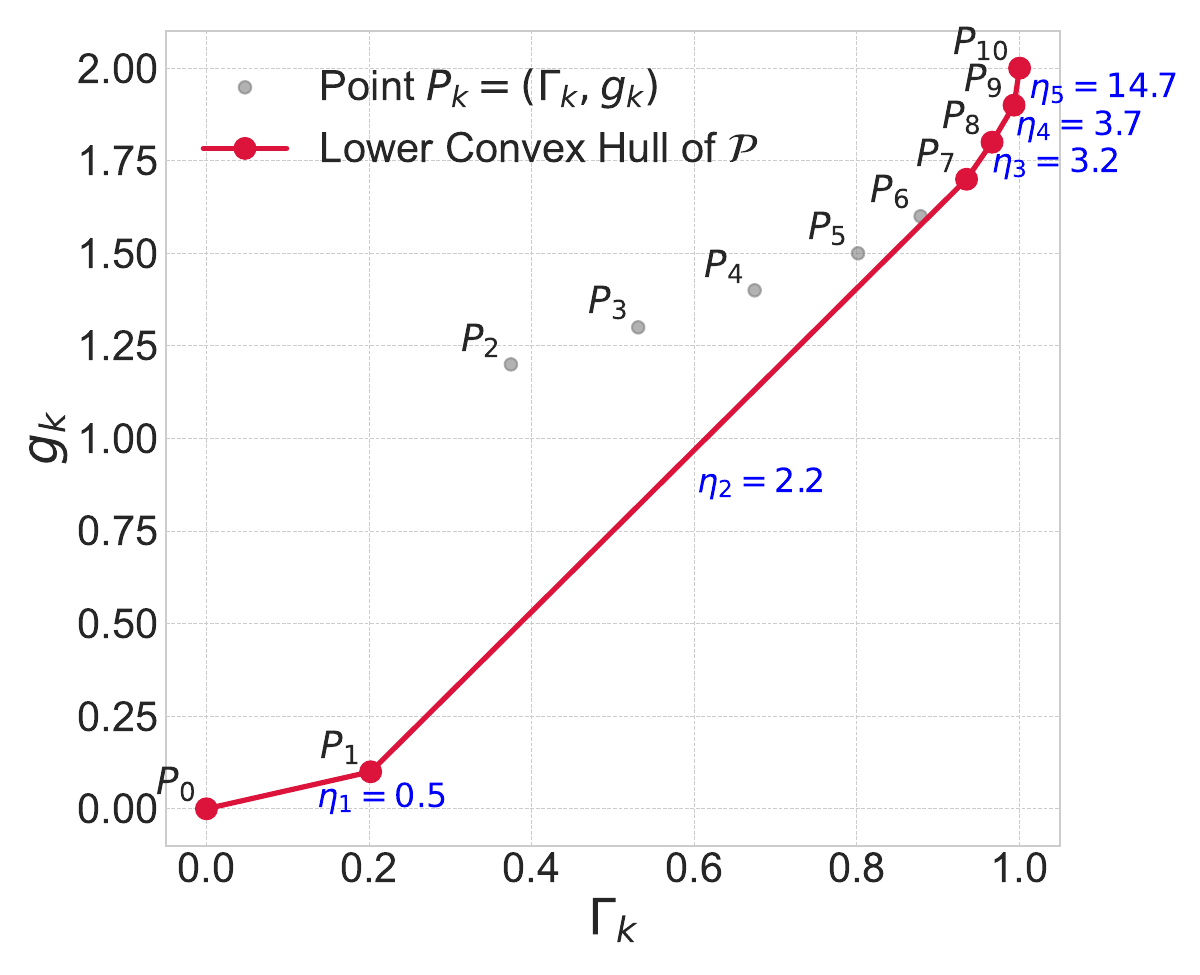}
    \caption{ Lower convex hull for a simulated probability vector with $K=10$.}
      \label{fig:breg}
\end{figure}
nonconformity scores and the final prediction sets. 
The form $r(x, y_i) = \inf\{\eta \ge 0 : y_i \in S_{\eta, \hat{p}(\cdot|x)}\}= \inf\{\eta \ge 0 : \kappa(\eta; \hat{p}(\cdot|x)) \ge i\}$ is equivalent to 
finding \emph{the smallest
slope $\eta_j$ that leads to a prediction set of size $v_j\ge i$}. Intuitively, the slope $\eta$ represents the ``price" per unit of coverage relative to the set-size penalty. Navigating the lower convex hull corresponds to finding the minimum price required to ``purchase" enough coverage to include the target label $y_i$ in the set.
    To compute this, we can first find the vertices of the lower convex hull (which can be done with a standard approach, see Algorithm \ref{alg:compute_lch}), 
    and identify the correct slope; these can be performed  in a single loop.
Having an efficient algorithm to compute the nonconformity score is useful in a variety of conformal prediction methods, such as split conformal prediction \citep{papadopoulos2002inductive}, cross-conformal prediction \citep{vovk2015cross}, Mondrian conformal prediction \citep{vovk2005algorithmic}, etc. 
In this paper, we will focus on split conformal prediction, which is one of the most popular and widely applicable methods.    

To run split conformal prediction 
given a set of $n$ calibration data points and a desired target coverage level $1 - \alpha$ in $[0,1]$, we can compute
$\hat{q}$, the
$(1-\alpha)(1+1/n)$-th quantile of the nonconformity scores over the calibration set; see Algorithm \ref{alg:convex_hull_calibration} in the Appendix.

{\bf Coverage guarantees.}
The guarantees of conformal prediction are inherited here. 
 Specifically, if our calibration and test data point are exchangeable, then we have that 
 $P(Y_{n+1} \in \hat{C}(X_{n+1})) \ge 1-\alpha$, where the randomness is taken jointly over the calibration and test data.

{\bf Computing the prediction set. } 
Consider a new data point $x_{n+1}$ 
for which we aim to compute the prediction set $\hat{C}(x_{n+1})$.
The range of $r(x_{n+1},y_1),\cdots,r(x_{n+1},y_K)$ 
is set of the discontinuity points of $\kappa(\eta;\hat{p}(\cdot|x_{n+1}))$. 
Therefore,
 due to the monotonicity of $\kappa$ in $\eta$, 
we do not need to compute the score for each candidate label individually.  Instead, we can directly search for the maximal slope 
along the lower convex hull
 that falls below the  quantile $\hat q$;  see Algorithm \ref{alg:convex_hull_prediction}.

\begin{algorithm}
\caption{ SOCOP: Singleton-Optimized (Split) Conformal Prediction; with Singleton-Optimized Score}
\label{alg:convex_hull_prediction}
\begin{algorithmic}[1]

\Require Pre-trained model: $\hat{p}$, test point: $X_{n+1}$, penalty: $\lambda>0$, $(1-\alpha)(1+1/n)$-th quantile of calibration set nonconformity scores: $\hat{q}$.
\Ensure A prediction set $\hat{C}(X_{n+1})$ with coverage $1-\alpha$.

\State Sort $\hat{{p}}(\cdot|X_{n+1})$ to get $\hat{{p}}_{\textnormal{sorted}}(\cdot|X_{n+1})$ and associated labels $\textnormal{idx}_{\textnormal{sorted}, n+1}$
\State $(\mathcal{V}, {\Gamma}, {g}) \gets \textnormal{Find lower convex hull using Algorithm \ref{alg:compute_lch} with input } (\hat{{p}}_{\textnormal{sorted}}(\cdot|X_{n+1}), \lambda)$

\State $k_{\textnormal{final}} \gets 0$
\For{$j=1$ \textbf{to} $|\mathcal{V}|-1$}
    \State $v_- \gets \mathcal{V}[j-1]$; \quad $v_+ \gets \mathcal{V}[j]$;
    \quad $\eta_j \gets \left(g_{v_{+}}-g_{v_{-}}\right) /\left(\Gamma_{v_{+}}-\Gamma_{v_{-}}\right)$
    \State If $\eta_j \le \hat{q}$ then $k_{\textnormal{final}} \gets v_+$, else break from for loop
\EndFor

\State $\hat{C}(X_{n+1}) \gets \{\;\textnormal{idx}_{\textnormal{sorted}, n+1}[k]\;:\; 0 \le k \le k_{\textnormal{final}}-1\;\}$
\State \Return $\hat{C}(X_{n+1})$
\end{algorithmic}
\end{algorithm}

\subsection{The scope of our framework}
 In this section, we discuss certain important special cases and extensions of our  methodology. 

 Our nonconformity score was derived starting from a linear combination of the singleton probability and the expected length. Therefore, it would be reassuring to know that our solution can indeed provably interpolate between the two by recovering them in certain limiting cases. In the next result, we show that this is indeed true and that our nonconformity score reduces to the corresponding nonconformity scores for these two cases.
 Recall below that we consider the labels to be sorted such that $\hat p(y_1|x) \ge \hat p(y_2|x) \ge \ldots$.
 
\begin{corollary}[Recovery of singleton objective optimization and least ambiguous sets]
\label{cor:recovery-special-case} 
(1)  When $\lambda\rightarrow \infty $, the nonconformity scores have the limit
    \(
    r_{\textnormal{las}}(x,y_i)=1/\hat{p}(y_i|x).
    \)
The resulting split conformal prediction sets have the form 
$\{y \in \mathcal{Y}:\hat{p}(y|x_{n+1}) \geq c\}$, for some quantity $c$, recovering least ambiguous set-valued classifiers \citep{sadinle2019least}.

(2) 
When $\lambda=0$, the nonconformity score becomes
$r_{\textnormal{singleton}}(x,y_i) = I(i \geq 2)\bigl(1 - \hat{p}(y_1 \mid x)\bigr)^{-1}$.
 The resulting split conformal prediction sets are either the top-1 label $\{y_1\}$
 if $\hat{p}(y_1|x_{n+1})\ge c$, for some quantity $c$, 
 or the whole set $\mathcal{Y}$ otherwise.
\end{corollary}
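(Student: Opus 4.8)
The plan is to read both limits off the geometric characterization in Theorem \ref{thm:k_mu_properties_formal_revised}: since $r_\lambda(x,y_i)=\inf\{\eta\ge0:\kappa(\eta;\hat p(\cdot|x),\lambda)\ge i\}$ equals, by that theorem, the slope $\eta_j$ of the lower-convex-hull edge at which the hull index $v_j$ first reaches $i$, it suffices to identify the vertices and edge slopes of the lower convex hull of $\mathcal{P}=\{P_k=(\Gamma_k,g_k)\}_{k=0}^{K}$ in each regime, where $\Gamma_k=\sum_{j\le k}\hat p(y_j|x)$ and $g_k=I(k>1)+\lambda k$, and then pass to the prediction set via $\hat C(x_{n+1})=\{y_i:r_\lambda(x_{n+1},y_i)\le\hat q\}$ (using that $\hat q\ge0$, since all scores are nonnegative). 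We treat the easier case $\lambda=0$ first.

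\emph{The case $\lambda=0$.} Here $g_0=g_1=0$ and $g_k=1$ for $k\ge2$, so each of $P_2,\dots,P_{K-1}$ has ordinate $1$, while the segment from $P_1=(\hat p(y_1|x),0)$ to $P_K=(1,1)$ has ordinate strictly below $1$ at every abscissa $\Gamma_k<1$; hence the lower hull has vertices exactly $P_0,P_1,P_K$ (so $m=2$, $v_1=1$, $v_2=K$), with edge slopes $\eta_1=0$ and $\eta_2=(1-\hat p(y_1|x))^{-1}$. By Theorem \ref{thm:k_mu_properties_formal_revised}, $\kappa(\cdot;\hat p(\cdot|x),0)$ takes values in $\{0,1,K\}$, giving $r_0(x,y_1)=\eta_1=0$ and $r_0(x,y_i)=\eta_2=(1-\hat p(y_1|x))^{-1}$ for all $i\ge2$, i.e., $r_{\textnormal{singleton}}(x,y_i)=I(i\ge2)(1-\hat p(y_1|x))^{-1}$. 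Then $y_1\in\hat C(x_{n+1})$ always, and $y_2,\dots,y_K$ enter together precisely when $(1-\hat p(y_1|x_{n+1}))^{-1}\le\hat q$, equivalently $\hat p(y_1|x_{n+1})\le 1-1/\hat q$; so $\hat C(x_{n+1})$ equals $\{y_1\}$ when $\hat p(y_1|x_{n+1})$ exceeds $c:=1-1/\hat q$ and $\mathcal{Y}$ otherwise (the boundary settled by the minimal-size tie-breaking convention).

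\emph{The case $\lambda\to\infty$.} I would first show that once $\lambda$ is large enough (depending on $\hat p(\cdot|x)$), every $P_k$ is a vertex of the lower hull, by verifying that the slope of the edge from $P_{k-1}$ to $P_k$ does not exceed that of the edge from $P_k$ to $P_{k+1}$ for each $k$: for $k=1$ this is $\lambda/\hat p(y_1|x)\le(1+\lambda)/\hat p(y_2|x)$, which always holds; for $k\ge3$ it is $\hat p(y_k|x)\ge\hat p(y_{k+1}|x)$, which holds by the sorted order; and for $k=2$ it is $(1+\lambda)/\hat p(y_2|x)\le\lambda/\hat p(y_3|x)$, which holds once $\lambda>\hat p(y_3|x)/(\hat p(y_2|x)-\hat p(y_3|x))$. (Ties in $\hat p(\cdot|x)$ merely make a block of tied labels share a single score equal to the slope of the hull edge spanning that block, which is handled separately and does not change the conclusion.) Then $v_i=i$ and the slopes are $\eta_1=\lambda/\hat p(y_1|x)$, $\eta_2=(1+\lambda)/\hat p(y_2|x)$, and $\eta_i=\lambda/\hat p(y_i|x)$ for $i\ge3$, so $r_\lambda(x,y_i)=\eta_i$ and hence $\lambda^{-1}r_\lambda(x,y_i)\to1/\hat p(y_i|x)=r_{\textnormal{las}}(x,y_i)$. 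Since split conformal prediction sets are invariant under scaling the nonconformity score by a positive constant, and $\lambda^{-1}r_\lambda\to r_{\textnormal{las}}$ pointwise, continuity of the finite-sample quantile implies the limiting prediction sets are those of $r_{\textnormal{las}}$, namely the level sets $\hat C(x_{n+1})=\{y:\hat p(y|x_{n+1})\ge c\}$ with $c=1/\hat q_{\textnormal{las}}$, recovering least ambiguous set-valued classifiers.

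The main obstacle is making the $\lambda\to\infty$ statement precise. The raw scores do not converge---indeed $r_\lambda(x,y_1)=\lambda/\hat p(y_1|x)$ exactly for every $\lambda>0$---so the claimed ``limit'' must be read after the natural normalization by $\lambda$, equivalently up to the positive monotone rescaling under which split conformal is invariant, and one has to track that the calibration threshold rescales consistently so that the limiting sets are genuinely the LAS level sets. Beyond this, the only care needed is with the degenerate cases---ties in $\hat p(\cdot|x)$, the $\hat q=0$ edge case, and the boundary in the condition on $\hat p(y_1|x_{n+1})$---all of which are dispatched by the minimal-size tie-breaking convention and involve only routine checks.
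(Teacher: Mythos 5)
Your proof is correct and, for the $\lambda=0$ case, follows the same route as the paper: identifying the lower-hull vertices as $P_0, P_1, P_K$, reading off the two slopes $\eta_1=0$ and $\eta_2=(1-\hat p(y_1|x))^{-1}$, and deducing the dichotomous prediction set.

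For $\lambda\to\infty$, your argument is valid but structured differently. You work with the unnormalized points $P_k=(\Gamma_k, I(k>1)+\lambda k)$, explicitly verify that once $\lambda > \hat p(y_3|x)/(\hat p(y_2|x)-\hat p(y_3|x))$ every $P_k$ is a vertex (with separate checks at $k=1,2$ and $k\ge 3$), compute the edge slopes $\eta_i$, and then observe that $\lambda^{-1}r_\lambda(x,y_i)\to 1/\hat p(y_i|x)$. The paper instead divides the Lagrangian by $\lambda$ at the outset, changing variables $\tilde\eta=\eta/\lambda$, so the point set becomes $(\Gamma_k, I(k>1)/\lambda + k)\to(\Gamma_k,k)$ as $\lambda\to\infty$; in that rescaled geometry the hull trivially keeps all $K+1$ vertices (with slopes $1/\hat p(y_{k+1}|x)$) without any ``large enough $\lambda$'' threshold, and the limit drops out immediately. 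The two approaches buy roughly the same thing but the paper's normalization makes the case analysis disappear, whereas yours makes explicit \emph{how} large $\lambda$ must be for all points to become hull vertices. Your observation that the raw scores do not converge (e.g., $r_\lambda(x,y_1)=\lambda/\hat p(y_1|x)$ diverges) and that the claimed ``limit'' must be read modulo the positive rescaling under which split conformal is invariant is exactly the point the paper handles implicitly via its change of variables; you state it more explicitly, which is a small but genuine improvement in clarity. Both arguments handle ties by collapsing collinear points into a single hull edge; your treatment of this is a bit briefer than the paper's, but correct.
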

The proof of Corollary \ref{cor:recovery-special-case} is provided in Appendix \ref{app:proof}. 
 The solution to the pure singleton objective has an intriguing structure. The prediction sets are \emph{either the top-label or the full-label set}. 
 This is intuitively reasonable: in the singleton objective, we are not paying any cost for the first label included in the set, so it makes sense to always include the most confident label. Moreover, we are paying full cost for any additional label included, 
 and thus to ensure coverage, it is reasonable to include all labels into the prediction set. 

However, this dichotomous behavior may not provide enough granularity in practice and may often output large prediction sets. 
This motivates our approach of taking a linear combination between the singleton and the length objectives. Our empirical results demonstrate that the nonconformity scores derived from this linear combination offer a favorable trade-off, significantly reducing the probability of non-singletons compared, while only increasing the length by a little.

{\bf Extension to $P(\textnormal{size}>k_0)$.}
Beyond controlling the probability of non-singletons, in some applications it might instead be more desirable to control the probability of sets larger than some other number, such as two, three, or ten. 
For instance, we might have two 
employees check one output each,
and thus we might tolerate prediction sets of size two. 
Therefore, it is desirable to extend our framework to control the probability $\mathbb{P}_X[|C(X)| > k_0]$ of sets size larger than $k_0 \in \{1,\dots,K-1\}$. 

Fortunately, it turns out that our methods extend seamlessly to this case. 
We need to minimize 
\(
\mathbb{P}_X\!\left[\,|C(X)| > k_0\,\right]\) \(+ \lambda \,\mathbb{E}_X[|C(X)|],
\)
subject to the same coverage constraint. The corresponding Lagrangian and separability arguments proceed similarly, with the difference 
that in the problem \eqref{eq:opt_reduced}, $g_k$ in  the cost function becomes
\(
g_k = I(k > k_0) + \lambda k,  k=0,1,\dots,K.
\)
The remaining steps are identical. 
The nonconformity score for the case $\lambda=0$ from Corollary \ref{cor:recovery-special-case} becomes 
$r_{\textnormal{top-k}}(x,y_i) = I(\{i>k_0\})/(1 - \sum_{j=1}^{k_0} \hat{p}(y_j \mid x))$.
The corresponding sets consist of either the top $k_0$ indices or the full set.

\vspace{-1em}
\section{Experiments}
\label{exp}
\vspace{-1em}
In this section we report experiments 
comparing our \texttt{SOCOP} method  
with several prediction sets.
The first one uses the probabilities output by the classifier directly, sorts them in decreasing order, and outputs the smallest set of classes whose predicted probabilities sum to at least $1-\alpha$; we call this the \texttt{Plug-In} sets.\footnote{This strategy is not theoretically guaranteed to attain the nominal level of coverage. However, it can be viewed as a reasonable empirically motivated baseline that practitioners might use by default.}
We also report results with split conformal prediction sets using a variety of nonconformity scores such as 
\texttt{RAPS} \citep{angelopoulos2021uncertainty}; \texttt{Pure Singleton}  $(\lambda=0)$; \texttt{Least Ambiguous Sets} \citep{sadinle2019least}, corresponding to the nonconformity score\footnote{In \Cref{cor:recovery-special-case}, we wrote this non-conformity score as $1/\hat p(y|x)$; These are equivalent since any strictly monotone transformation of a nonconformity score induces the same prediction sets.} $(x,y)\mapsto 1-\hat p(y|x)$, which recovers the case $\lambda\to\infty$ in our method. 
We additionally evaluate the \texttt{CPL} method proposed by \cite{kiyani2024length}. 
This approach employs the same nonconformity score as \texttt{Least Ambiguous Sets}, but is conceptually different, as it replaces split conformal prediction with a training procedure to optimize prediction set length. Results are reported in Table~\ref{tab:cpl_results} in Appendix \ref{app:cpl_results}.

For our proposed \texttt{SOCOP} method, the hyperparameter $\lambda$ is selected by aiming to find a "knee point" of the size-singleton probability curve on the tuning subset, as detailed in Section \ref{subsec:effect-lambda}.
The evaluation metrics we use are \texttt{Coverage}, \texttt{Average Size}, and \texttt{P(size$>$1)}. \footnote{We also evaluated the empty set rate, $P(|C(X)|=0)$, across all experiments. We observed that empty sets occur in fewer than 0.01\% of test instances for all methods, with the exception of RAPS (where they can be slightly larger but still insignificant, reaching $\approx 0.1\%$). Consequently, we omit this metric from the results as its impact is negligible.}

\subsection{Image classification on ImageNet}
\label{subsec:ImageNet}
\vspace{-0.5em}
First, we consider image classification on the ImageNet-Val and ImageNet-V2 datasets, with several models, including \texttt{ResNet152-v2}, \texttt{EfficientNet-v2-l}, and \texttt{ViT-h-14}.

\begin{table}[ht]
\centering
\caption{ Performance on ImageNet-Val, for a Coverage of $1-\alpha=0.95$;
Methods compared: \texttt{Plug-In}, \texttt{RAPS} \citep{angelopoulos2021uncertainty}, 
\texttt{Pure Singleton}  $(\lambda=0)$, 
\texttt{Least Ambiguous Sets} $(\lambda=\infty)$ \citep{sadinle2019least,kiyani2024length} 
and our method \texttt{SOCOP}. 
Results are averages over 100 random splits. The smallest values in each column are highlighted in \textcolor{green!50!black}{green}, 
while all results worse than our method are highlighted in \textcolor{red!70!black}{red}. 
For our method \texttt{SOCOP}, the Avg Size and $P(\textnormal{size}>1)$ are highlighted in light green 
to facilitate comparison across models.}
\label{tab:imgnet-val}
 
\begin{tabular}{l l c c c}
\toprule
Model & Method & Coverage & Avg Size & $P(\textnormal{size}>1)$ \\
\midrule
\multirow{5}{*}{\texttt{ResNet152-v2}}
  & \texttt{Plug-In}              & $0.968 \pm 0.003$ & \cellcolor{red!30}$44.955 \pm 5.558$  & \cellcolor{red!30}$0.460 \pm 0.019$ \\
  & \texttt{RAPS}                 & $0.950 \pm 0.002$ & \cellcolor{red!30}$3.158 \pm 0.101$   & \cellcolor{red!30}$0.603 \pm 0.154$ \\
  & \texttt{Pure Singleton}  & $0.949 \pm 0.002$ & \cellcolor{red!30}$249.453 \pm 4.960$ & \cellcolor{green!30}$0.249 \pm 0.005$ \\
  & \texttt{Least Ambiguous Sets} & $0.950 \pm 0.002$ & \cellcolor{green!30}$2.274 \pm 0.046$ & \cellcolor{red!30}$0.466 \pm 0.007$ \\
  & \textbf{\texttt{SOCOP}} (ours)& $0.950 \pm 0.002$ & \cellcolor{green!15}$2.477 \pm 0.048$ & \cellcolor{green!15}$0.370 \pm 0.006$ \\
\midrule
\multirow{5}{*}{\texttt{ViT-h-14}}
  & \texttt{Plug-In}              & $0.976 \pm 0.001$ & \cellcolor{red!30}$8.529 \pm 0.805$   & \cellcolor{red!30}$0.356 \pm 0.008$ \\
  & \texttt{RAPS}                 & $0.950 \pm 0.002$ & \cellcolor{red!30}$1.380 \pm 0.020$   & \cellcolor{red!30}$0.314 \pm 0.031$ \\
  & \texttt{Pure Singleton}  & $0.950 \pm 0.002$ & \cellcolor{red!30}$136.219 \pm 4.980$ & \cellcolor{green!30}$0.135 \pm 0.005$ \\
  & \texttt{Least Ambiguous Sets} & $0.950 \pm 0.002$ & \cellcolor{green!30}$1.291 \pm 0.011$ & \cellcolor{red!30}$0.224 \pm 0.006$ \\
  & \textbf{\texttt{SOCOP}} (ours)& $0.950 \pm 0.002$ & \cellcolor{green!15}$1.356 \pm 0.017$ & \cellcolor{green!15}$0.175 \pm 0.006$ \\
\bottomrule

\end{tabular}
\end{table}

{\bf Evaluation on ImageNet-Val.}
In this experiment, we randomly sample three subsets of Imagenet-Val over 100 trials: one tuning subset of size 10K, one conformal
calibration subset of size 20K and one evaluation subset of size 20K.
For the \texttt{RAPS} baseline, we employ the hyperparameter tuning method from Algorithm 4 of \cite{angelopoulos2021uncertainty}. 

The averaged results for  \texttt{ResNet152-v2} and \texttt{ViT-h-14}, along with standard errors, are reported in Table \ref{tab:imgnet-val}. Results for  
 \texttt{EfficientNet-v2-l}, \texttt{ConvNeXt-base},  and \texttt{Swin-v2-b} are in Appendix \ref{app:imgval-additional}, and show similar trends.
All methods achieve the target coverage of $0.95$. Our method \texttt{SOCOP} outperforms \texttt{Plug-In} and \texttt{RAPS} in both \texttt{Average Size} and \texttt{P(size$>$1)}. 
We find this result remarkable, because these two scores are among the most widely used ones, see e.g., \cite{romano2020classification,angelopoulos2023conformal}. 

Compared to \texttt{Least Ambiguous Sets}, \texttt{SOCOP} maintains a good balance: it produces sets nearly as small as
\texttt{Least Ambiguous Sets} while significantly reducing the probability of non‐singletons.

{\bf Evaluation on ImageNet-V2.}
We apply the same evaluation pipeline to ImageNet-V2 \citep{recht2019imagenet}, which is a more challenging test dataset. This dataset was constructed by re-collecting images with a new sampling pipeline, introducing a natural distribution shift that typically results in a significant drop in accuracy for models trained on the original ImageNet dataset. Since this is a smaller dataset, we randomly sample three subsets of Imagenet-Val over 100 trials: one tuning subset of size 1K, one conformal calibration subset of size 4K, and one evaluation subset of size 4K. 
Table \ref{tab:imgnet-v2}
 in Appendix \ref{app:imgv2-additional}
reports the results for all five models.
The empirical findings are consistent with those on ImageNet-Val. Notably, the advantages of \texttt{SOCOP} are even more pronounced on this more challenging dataset. 
The variance of the coverage is higher due to having less data.

\subsubsection{Effect of \texorpdfstring{$\lambda$}{lambda} and Hyperparameter Tuning}\label{subsec:effect-lambda}
\vspace{-1em}
\begin{figure}[ht]
    \centering
    \includegraphics[width=0.8\linewidth]{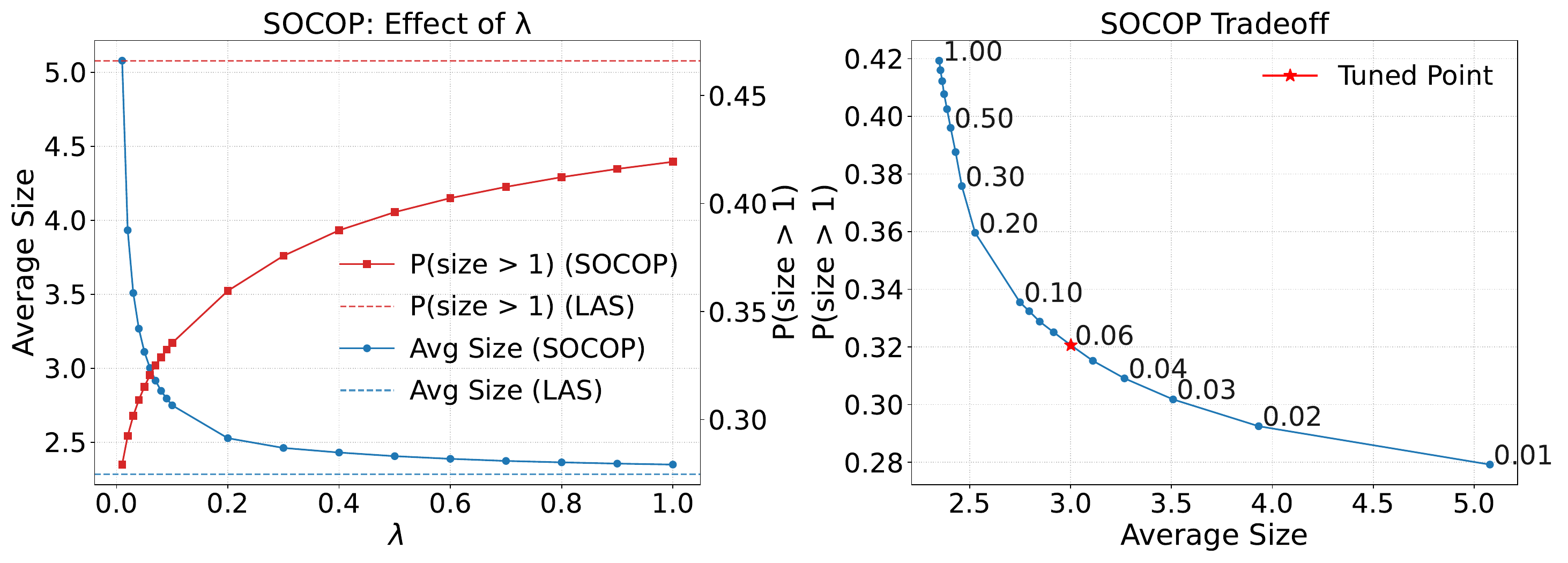}
     \vspace{-10pt}
    \caption{ Visualizing the evaluation results for of \texttt{ResNet152-v2} on ImageNet-Val from Table \ref{tab:resnet_lambda_imgval}. LAS denotes \texttt{Least Ambiguous Sets}. Left: Average size and $P(\textnormal{size}>1)$ varying with $\lambda$; Right: visualization of (Average size, $P(\textnormal{size}>1)$), each point corresponding to a specific $\lambda$. Results corresponding to the hyperparameter $\lambda$ selected by the kneedle algorithm \citep{satopaa2011finding} are highlighted.}
    \label{fig:socop_tradeoff}
\end{figure}
\begin{figure}[ht]
    \centering
    \includegraphics[width=0.75\linewidth]{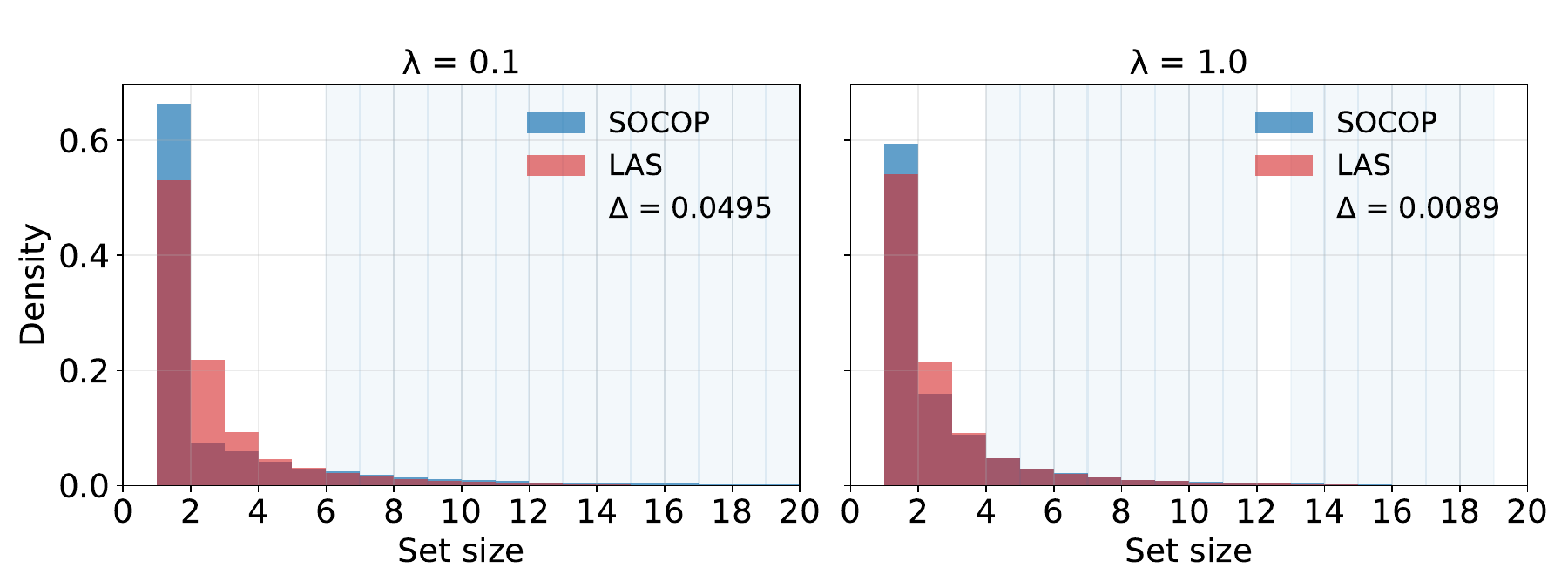}
    \caption{ Set sizes produced with \texttt{ResNet152-v2} on ImageNet-Val.  LAS denotes \texttt{Least Ambiguous Sets}. Bars indicate empirical probabilities of set sizes, and shaded bins mark non-singleton set sizes where \texttt{SOCOP} assigns higher mass. Reported $\Delta$ values denote the cumulative probability difference on shaded bins. The x-axis is truncated at 20 for clarity.}
    \label{fig:set_histogram}
\end{figure}
Next, we study the effect of the regularization parameter $\lambda$ on \texttt{Average Size} and \texttt{P(size$>$1)}.
See Figure~\ref{fig:socop_tradeoff} for the trade-offs on the \texttt{ResNet152-v2} model evaluated over ImageNet-Val.
Results are  averaged over 100 random splits of Imagenet-Val, each of size 20K for calibration and 20K for evaluation. 
As $\lambda$ goes from $0$ to $\infty$, the \texttt{Average Size} decreases from the level of the \texttt{\texttt{Pure Singleton}} $(\lambda=0)$ and converges to the \texttt{Least Ambiguous Sets} limit $(\lambda=\infty)$; 
while \texttt{P(size$>$1)} follows an opposite trajectory. 

The right panel of Figure~\ref{fig:socop_tradeoff} summarizes this trade-off by plotting \texttt{P(size$>$1)} against \texttt{Average Size}. 
In practice, one can choose $\lambda$ according to their own preference
by drawing the tradeoff plot (the right panel of Figure~\ref{fig:socop_tradeoff}) on their tuning dataset.
For illustration,  
in our experiments from Section \ref{subsec:ImageNet} we use the kneedle algorithm \citep{satopaa2011finding}, 
which is a popular method for choosing points along a trade-off curve that come with favorable trade-offs. 
All five models exhibit the same pattern on both ImageNet-Val and ImageNet-V2, see Tables~\ref{tab:resnet_lambda_imgval}-\ref{tab:vith14_lambda_imgv2} in the Appendix.

We investigate the effect of regularization in more detail.
For two values of $\lambda$, we collect the set sizes produced by 
\texttt{SOCOP} and \texttt{Least Ambiguous Sets}, and report their histograms in Figure \ref{fig:set_histogram}. The figure shows that \texttt{SOCOP} yields more singleton sets and fewer small set sets (as desired), 
but produces slightly more sets with a large size sets (as expected due to the tradeoff). To quantify this shift toward larger sets, we calculate 
the cumulative excess probability mass $\Delta$ of \texttt{SOCOP} over \texttt{Least Ambiguous Sets} on non-singleton sizes, i.e.,
$\Delta := \sum_{i=2}^K I(f_i^{\textnormal{SOCOP}}>f_i^{\textnormal{LAS}}) (f_i^{\textnormal{SOCOP}}-f_i^{\textnormal{LAS}}),$
where $f_i^{\textnormal{SOCOP}},f_i^{\textnormal{LAS}}$ are empirical frequencies of prediction set size $i$ for the two methods. 
We observe that this value is small, meaning that our method only leads to slightly more large sets.

\subsubsection{Adaptiveness on ImageNet}
In this experiment, we evaluate the size-stratified coverage violation (SSCV) introduced by \cite{angelopoulos2021uncertainty} as a  measure
of adaptiveness and conditional coverage violation. Following \cite{angelopoulos2021uncertainty}, we adopt the same set-size strata : 0-1, 2-3, 4-
10, 11-100, and 101-1000, and to maximize adaptiveness, we choose the hyperparameter $\lambda$ to minimize SSCV on the tuning set for \texttt{RAPS} and \texttt{SOCOP}. The details of the hyperparameter grid are provided in Appendix \ref{app:lambda-grid}. The results are reported in Table \ref{tab:imgnet-val-sscv}.

\begin{table}[ht]
\centering
\caption{\footnotesize Evaluation results for the SSCV metric on ImageNet-Val, with the same protocol as in Table \ref{tab:imgnet-val}.}
\label{tab:imgnet-val-sscv}
\footnotesize 
\begin{tabular}{l l c c c c}
\toprule
Method & Coverage & Avg Size & $P(\textnormal{size}>1)$ & SSCV \\
\midrule
\multicolumn{5}{l}{\texttt{ResNet152-v2}} \\
\midrule
  \texttt{Plug-In}              & $0.969 \pm 0.003$ & \cellcolor{red!30}$47.362 \pm 7.138$  & \cellcolor{red!30}$0.469 \pm 0.025$ & \cellcolor{red!30}$0.046 \pm 0.001$ \\
  \texttt{RAPS}                 & $0.950 \pm 0.002$ & \cellcolor{red!30}$8.568 \pm 1.580$   & \cellcolor{red!30}$0.448 \pm 0.012$ & \cellcolor{green!30}$0.031 \pm 0.011$ \\
  \texttt{Pure Singleton}       & $0.950 \pm 0.002$ & \cellcolor{red!30}$250.539 \pm 4.554$ & \cellcolor{green!30}$0.250 \pm 0.005$ & \cellcolor{red!30}$0.050 \pm 0.000$ \\
  \texttt{Least Ambiguous Sets} & $0.950 \pm 0.002$ & \cellcolor{green!30}$2.279 \pm 0.046$ & \cellcolor{red!30}$0.467 \pm 0.007$ & \cellcolor{red!30}$0.197 \pm 0.026$ \\
  \textbf{\texttt{SOCOP}} (ours)& $0.950 \pm 0.002$ & \cellcolor{green!15}$3.372 \pm 0.198$ & \cellcolor{green!15}$0.304 \pm 0.008$ & \cellcolor{green!15}$0.039 \pm 0.009$ \\
\midrule
\multicolumn{5}{l}{\texttt{ViT-h-14}} \\
\midrule
  \texttt{Plug-In}              & $0.976 \pm 0.001$ & \cellcolor{red!30}$8.529 \pm 0.805$   & \cellcolor{red!30}$0.356 \pm 0.008$ & \cellcolor{red!30}$0.048 \pm 0.002$ \\
  \texttt{RAPS}                 & $0.950 \pm 0.002$ & \cellcolor{red!30}$7.652 \pm 2.259$   & \cellcolor{red!30}$0.319 \pm 0.007$ & \cellcolor{red!30}$0.047 \pm 0.003$ \\
  \texttt{Pure Singleton}       & $0.950 \pm 0.003$ & \cellcolor{red!30}$136.219 \pm 4.980$ & \cellcolor{green!30}$0.135 \pm 0.005$ & \cellcolor{red!30}$0.050 \pm 0.000$ \\
  \texttt{Least Ambiguous Sets} & $0.950 \pm 0.002$ & \cellcolor{green!30}$1.291 \pm 0.011$ & \cellcolor{red!30}$0.224 \pm 0.006$ & \cellcolor{red!30}$0.126 \pm 0.119$ \\
  \textbf{\texttt{SOCOP}} (ours)& $0.950 \pm 0.002$ & \cellcolor{green!15}$1.519 \pm 0.068$ & \cellcolor{green!15}$0.155 \pm 0.006$ & \cellcolor{green!30}$0.041 \pm 0.016$ \\
\bottomrule

\end{tabular}
\end{table}
Our method \texttt{SOCOP} and \texttt{RAPS} achieve the smallest SSCV among the methods compared. 
However, the average size of \texttt{RAPS} increases drastically (from $\approx3.2$ to $\approx8.6$ for \texttt{Resnet152-v2} and from $\approx1.4$ to $\approx7.7$ for \texttt{ViT-h-14}), while our SOCOP method maintains a reasonably small average size and a significantly lower non-singleton probability, demonstrating that SOCOP can achieve adaptivity without sacrificing efficiency.

\subsection{Image classification on TissueMNIST}
We further evaluate on a medical image classification problem.
We use the TissueMNIST dataset, a subset of MedMNIST \citep{yang2023medmnist}, which contains microscopy images of human kidney cortex cells categorized into eight classes. 
We use a \texttt{ResNet-50(224)} model released by the dataset authors. We perform 100 random splits into 10K/15K/15K for tuning, calibration, and evaluation. The  results are summarized in Table \ref{tab:tissuemnist}. As in previous experiments, we observe that our method can significantly reduce the non-singleton probability
(by about 15\%), while increasing the average size only slightly. 
This again validates the efficiency of our method.

\begin{table}[ht]
\centering
\caption{ Evaluation results on TissueMNIST using \texttt{ResNet-50(224)} \citep{yang2023medmnist},
 with the same protocol as in Table \ref{tab:imgnet-val}.}
 \vspace{-1em}
\label{tab:tissuemnist}
 
\begin{tabular}{l c c c}
\toprule
Method & Coverage & Avg Size & $P(\textnormal{size}>1)$ \\
\midrule
\texttt{Plug-In}              & $0.973 \pm 0.002$ & \cellcolor{red!30}$3.294 \pm 0.030$ & \cellcolor{red!30}$0.866 \pm 0.005$ \\
\texttt{RAPS}                 & $0.950 \pm 0.003$ & \cellcolor{green!15}$2.844 \pm 0.031$ & \cellcolor{red!30}$0.844 \pm 0.006$ \\
\texttt{Pure Singleton}       & $0.950 \pm 0.003$ & \cellcolor{red!30}$4.931 \pm 0.053$ & \cellcolor{green!30}$0.562 \pm 0.008$ \\
\texttt{Least Ambiguous Sets} & $0.950 \pm 0.003$ & \cellcolor{green!30}$2.647 \pm 0.028$ & \cellcolor{red!30}$0.788 \pm 0.005$ \\
\textbf{\texttt{SOCOP}} (ours)& $0.950 \pm 0.003$ & \cellcolor{green!15}$2.847 \pm 0.037$ & \cellcolor{green!15}$0.638 \pm 0.009$ \\
\bottomrule
\end{tabular}
\vspace{-1em}
\end{table}

\subsection{Multiple Choice Question Answering}
\label{mcqa}
We also evaluate on MMLU \citep{hendryckstest2021}, a multiple-choice question answering dataset. Following the same evaluation pipeline, we perform 100 random splits into 4K/5K/5K for tuning, calibration, and evaluation. 
We use Llama-3.1-8B-Instruct \citep{dubey2024llama}, and 
following \cite{kiyani2024length}, we input the  fixed prompt: ``\textit{This is a 4-choice question that you should answer: \{question\}\{choices\}
The correct answer to this question is:
}''. We then extract the logits of the first output token corresponding to the answer options A, B, C, and D. Applying the softmax function yields probabilities over the four choices. 
The results are summarized in Table \ref{tab:mmlu_llama}.
 As in previous experiments, we observe that our method can reduce the probability that the set size is greater than one by a significant amount (about 10\%), while only increasing the average size by a negligible amount. 
This further reinforces that our approach provides a favorable trade-off between size and non-singleton probability. 

\begin{table}[ht]
\centering

\caption{ Evaluation on MMLU using \texttt{Llama-3.1-8B-Instruct},
 with the same protocol as in Table \ref{tab:imgnet-val}.}
 \vspace{-1em}

\begin{tabular}{lccc}
\toprule
Method & Coverage & Avg Size & $P(\textnormal{size}>1)$ \\
\midrule
\texttt{Plug-In}                & $0.965 \pm 0.002$ & \cellcolor{red!30}$2.648 \pm 0.013$ & \cellcolor{red!30} $0.745 \pm 0.005$ \\
\texttt{RAPS}                 & $0.950 \pm 0.004$ & \cellcolor{red!30}$2.601 \pm 0.032$ & \cellcolor{red!30}$0.779 \pm 0.025$ \\
\texttt{Pure Singleton}  & $0.950 \pm 0.004$ & \cellcolor{red!30}$2.633 \pm 0.029$ & \cellcolor{green!30}$0.544 \pm 0.010$ \\
\texttt{Least Ambiguous Sets}  & $0.950 \pm 0.004$ & \cellcolor{green!30}$2.426 \pm 0.030$ & \cellcolor{red!30}$0.675 \pm 0.008$ \\
\textbf{\texttt{SOCOP}} (ours)& $0.950 \pm 0.004$ & \cellcolor{green!15}$2.477 \pm 0.034$ & \cellcolor{green!15}$0.587 \pm 0.016$ \\
\bottomrule
\end{tabular}
\label{tab:mmlu_llama}
\vspace{-1em}
\end{table}

\subsection{Discussion}

SOCOP reframes efficiency in conformal classification around the goal of producing singletons, deriving a nonconformity score from a geometric analysis of a Lagrangian relaxation of the singleton objective. 
This yields an $O(K)$ per-instance algorithm, 
enabling split conformal sets that preserve marginal coverage while substantially increasing singleton frequency with minimal impact on average size. 
Empirically, over image classification
and LLM multiple-choice benchmarks, 
this reduces non-singleton rates significantly relative to length-optimized baselines at near-identical set sizes; suggesting that our method could be broadly useful in practice. 

In future work, it would be of interest to extend this method to more advanced conformal prediction methods, such as label-conditional or Mondrian conformal prediction \citep{vovk2005algorithmic}. Furthermore, a challenging but interesting theoretical direction is to design a nonconformity score intrinsically targeted for conditional coverage. This would likely entail retracing the derivation of SOCOP starting from a conditional-aware optimization objective, such as the one from \citet{gibbs2025conformal}. Finally, while our current protocol uses a separate tuning set to maintain validity, future work could investigate data-dependent selection of $\lambda$ using the calibration set directly to improve data efficiency, while accounting for the resulting tuning bias \citep{zeng2025parametric}.

\section*{Acknowledgments}

This work was partially supported by the NSF, ARO, AFOSR, ONR, and the Sloan Foundation.


\bibliography{ref}
\bibliographystyle{plainnat}
\appendix
\section{Additional related work}\label{app-sec:related-work}

The non-parametric techniques which have been studied in conformal prediction belong to a much broader tradition of predictive inference in statistics which over the years have been developed both under parametric and non-parametric assumptions. 
See for instance
\cite{geisser2017predictive}  
and more recent works such as 
\cite{bates2021distribution,park2021pac,park2022pac,sesia2022conformal,qiu2023prediction,si2024pac,lee2025conditional,bashari2025synthetic,joshi2025conformal}, 
which concern problems under a variety of assumptions.

Regarding the efficiency and optimality of conformal prediction,
 early work by Takeuchi in the 1970s---reviewed in \cite{takeuchi2020contributions}---has established fundamental results, such as the fact that conformal prediction with a conformity score equal to 
a particular density $f$ is optimal---in terms of minimizing the expected length at the distribution with density $f$---among all methods of predictive inference that have marginal coverage over all distributions. 
Modern work has revisited optimality questions from a variety of different angles, as discussed in the main paper. 

Conformal-type techniques have been developed 
to be used 
beyond  standard classification and regression problems,
for instance in sampling from large semantic spaces with generative AI models, see e.g.,
\cite{horwitz2022conffusion,teneggi2023trust,quach2024conformal,mohri2024language,chan2025conformal}, etc; and see \cite{dobriban2025statistical} for a review.
 In our work, we provide an illustration for language model multiple-choice question answering, which becomes a conventional classification problem. 

 Recent frameworks \citep{liang2024conformal, yang2025selection} optimize efficiency by \textit{selecting} a nonconformity score from a pre-specified candidate set that minimizes a target loss. This differs fundamentally from our approach, which uses the loss to \textit{derive} the score directly. Applying our composite loss within these selection frameworks faces two practical difficulties: (1) the resulting performance is strictly bounded by the pre-defined candidate pool; and (2) this would introduce the additional difficulty of selecting the hyperparameter $\lambda$, and their framework would have to be potentially extended to allow the selection of not just non-conformity scores but also loss functions.

\section{Auxiliary Results and Proofs}
\label{app:proof}

\subsection{Proof of Lemma~\ref{lem:structure_of_C_mu}}
Clearly,
    $S_{0,\gamma}=\emptyset=\mathcal{F}_0$. Now let $\eta>0$.
    If $S_{\eta,\gamma}$ is empty, then the claim holds; therefore, we only need to consider the case where $S_{\eta,\gamma}$ is non-empty. Assume $S_{\eta,\gamma}$ includes $y_{i_1}$ but not $y_{i_2}$ where $\gamma_{y_{i_2}} >  \gamma_{y_{i_1}}$. Then we can construct $S^\prime =\left(S_{\eta,\gamma} \backslash\left\{y_{i_1}\right\}\right) \cup\left\{y_{i_2}\right\}$ such that
    \[
    \ell_\gamma(S^\prime;\eta)= \ell_\gamma(S_{\eta,\gamma};\eta) +\eta\sum_{y \in S_{\eta,\gamma}} \gamma_{y}-\eta\sum_{y \in S^\prime} \gamma_{y}=\ell_\gamma(S_{\eta,\gamma};\eta) +\eta \left(\gamma_{y_{i_1}}-\gamma_{y_{i_2}}\right) <\ell(S_{\eta,\gamma};\eta),
    \]
which contradicts with the optimality of $S_{\eta,\gamma}$. Therefore, $S_{\eta,\gamma}$ must be the set of top-$j$ labels for some $j$ that depends on $\eta$ and $\gamma$. 
\qed

\subsection{Proof of Lemma \ref{lem:nested_sets_C_mu}}

    For any set $S \in [K]$, we define
    \(
      g(S)=I(|S|>1)+\lambda\,|S|\)
    and
    \(
      \Gamma(S)=\sum_{y\in S}\gamma_{y}.
    \)
    We will first prove that for $\eta_1 < \eta_2$, $\Gamma(S_{\eta_1,\gamma}) \leq \Gamma(S_{\eta_2,\gamma})$. Then by Lemma \ref{lem:structure_of_C_mu}, we must have $S_{\eta_1,\gamma}\subseteq S_{\eta_2,\gamma}$.

    By the optimality of each set, we have
    \begin{itemize}
      \item $\ell\bigl(S_{\eta_1,\gamma},\eta_1\bigr)\le \ell\bigl(S_{\eta_2,\gamma},\eta_1\bigr)$, i.e.,
      \(
        g\bigl(S_{\eta_1,\gamma}\bigr)-\eta_1\,\Gamma\bigl(S_{\eta_1,\gamma}\bigr)
        \;\le\;
        g\bigl(S_{\eta_2,\gamma}\bigr)-\eta_1\,\Gamma\bigl(S_{\eta_2,\gamma}\bigr),
      \)
      \item $\ell\bigl(S_{\eta_2,\gamma},\eta_2\bigr)\le \ell\bigl(S_{\eta_1,\gamma},\eta_2\bigr)$, i.e.,
      \(
        g\bigl(S_{\eta_2,\gamma}\bigr)-\eta_2\,\Gamma\bigl(S_{\eta_2,\gamma}\bigr)
        \;\le\;
        g\bigl(S_{\eta_1,\gamma}\bigr)-\eta_2\,\Gamma\bigl(S_{\eta_1,\gamma}\bigr).
      \)
    \end{itemize}
    Combining these inequalities gives
    \[
      \eta_2\bigl(\Gamma(S_{\eta_1,\gamma})-\Gamma(S_{\eta_2,\gamma})\bigr)
      \;\le\;
      g(S_{\eta_1,\gamma})-g(S_{\eta_2,\gamma})
      \;\le\;
      \eta_1\bigl(\Gamma(S_{\eta_1,\gamma})-\Gamma(S_{\eta_2,\gamma})\bigr).
    \]
    Since $\eta_2-\eta_1>0$, we have
    \(
      \Gamma(S_{\eta_1,\gamma})-\Gamma(S_{\eta_2,\gamma})\le0.
    \)
    By Lemma \ref{lem:structure_of_C_mu}, for any $\eta \ge0 $ and $\gamma \in \Delta_{K-1}$,
    $S_{\eta,\gamma}$ is the set of top-$j$ labels for some $j$. Then $\Gamma(S_{\eta_1,\gamma}) \leq \Gamma(S_{\eta_2,\gamma})$ implies that $S_{\eta_1,\gamma}\subseteq S_{\eta_2,\gamma}$.
\qed

\subsection{Proof of Theorem \ref{thm:k_mu_properties_formal_revised}}
To simplify notation, let us denote $\ell_\gamma^{(k)}(\eta):=\ell_{\gamma}(\mathcal{F}_k;\eta)=g_k-\eta \Gamma_k$.
Then
the minimization problem in \eqref{eq:opt_reduced} can be rewritten as $\kappa(\eta;\gamma) = \arg\min_{k} \ell_\gamma^{(k)}(\eta)$. The optimal value of the objective is given by
$$
\ell_\gamma^{*}(\eta) := \min_{0 \le k \le K} \ell_\gamma^{(k)}(\eta).
$$
This problem can be analyzed from two viewpoints, see also Figure \ref{fig:dual_view_example}. 

\begin{itemize}
    \item \textbf{Dual Space $(\eta,\ell)$:} 
    For each $k \in \{0, \dots, K\}$,
    we can view $\ell_\gamma^{(k)}(\eta) = g_k - \eta \Gamma_k$ as a linear function of $\eta$ for  $(\eta,\ell)\in \mathbb{R}^2$.  
    For a fixed value $\eta$, the optimal value $\ell_\gamma^{*}(\eta)$ corresponds to finding the lowest point among the intersections of the $K+1$ lines with the vertical line $\ell=\eta$. 
    As shown in the left plot of Figure \ref{fig:dual_view_example},
    the function $\ell_\gamma^{*}$ forms the lower envelope of this family of $K+1$ lines. 
    The vertices of this lower envelope correspond to the values of $\eta$ where the optimal index $\kappa(\eta;\gamma)$ transitions from one value to another.
    
    \item \textbf{Primal Space $(\Gamma,g)$}: 
     Since 
     $g_k = \eta \Gamma_k + \ell_\gamma^{(k)}(\eta)$,
      $\ell_\gamma^{(k)}(\eta)$ can be viewed as the intercept of a line with slope $\eta$ that passes through the point $P_k=\left(\Gamma_k, g_k\right)$. 
    For a fixed $\eta$, in the space of $(\Gamma,g)$, $\{g = \eta\Gamma + \ell, \ell \in \mathbb{R}\}$ is a family of parallel  lines. Minimizing $\ell_\gamma^{(k)}(\eta)$ over $k$ amounts to finding the first point in $\{P_0,\dots P_K\}$ that is "hit" by such a line as the intercept $\ell$ raises from $-\infty$. 
\end{itemize}
Mathematically, the duality between these two perspectives can be formalized using 
convex conjugacy. 
Define a primal function $\phi: [0, 1] \to [0,\infty]$ based on the point set $\mathcal{P} = \{P_0, \dots, P_K\}$:
$$
\phi(\Gamma)= \begin{cases}g_k & \text { if } \Gamma =\Gamma_k,\ 0\leq k\le K \\ +\infty & \text { otherwise.}\end{cases}
$$
The convex conjugate \citep[see e.g.,][]{rockafellar1997convex} of $\phi$ is
\begin{equation}
    \label{eq:convex_conjugate}
    \phi^*(\eta)=\sup _{\Gamma \in \mathbb{R}}\{\eta \Gamma-\phi(\Gamma)\}=\max_{0\leq k\le K} \{\eta \Gamma_k-g_k\}=-\ell_\gamma^{*}(\eta).
\end{equation}
Furthermore, the biconjugate of $\phi$, defined as the conjugate of $\phi^*$ is 
\begin{equation}
    \label{eq:biconjugate}
    \phi^{* *}(\Gamma)=\sup _{\eta \in \mathbb{R}}\left\{\eta \Gamma-\phi^*(\eta)\right\}=\sup _{\eta \in \mathbb{R}}\left\{\eta \Gamma+\ell_\gamma^{*}(\eta)\right\}.
\end{equation}
By the Fenchel-Moreau-Rockafellar theorem (see e.g. Theorem 3.2.2 in \cite{correa2023fundamentals}), $\phi^{**}$ is the closed convex hull of the original function $\phi$:
\(
\phi^{* *}=\overline{\operatorname{co}}(\phi),
\)
where $\overline{\operatorname{co}}(\phi)$ denotes the closed convex hull of $\phi$. 
Thus, it suffices to characterize $\overline{\operatorname{co}}(\phi)$, which we will do through its epigraph.
Let $\operatorname{epi}(\phi)$ denote the epigraph of $\phi$, defined as
$
\operatorname{epi}(\phi)=\bigcup_{k=1}^K\left\{\left(\Gamma_k, \omega\right) \mid \omega \geq g_k\right\}.
$
Using that $\operatorname{epi}(\overline{\operatorname{co}}(\phi))=\overline{\operatorname{co}}(\operatorname{epi(\phi)})$,
any point $(\Gamma,\omega)\in \overline{\operatorname{co}}(\phi)$ can be expressed as:
$$
(\Gamma, \omega)=\sum_{k=1}^K \beta_k\left(\Gamma_k, \omega_k\right)  \textnormal{ for some } \beta_k \geq 0  ,\sum \beta_k=1, \text { and } \left(\Gamma_k, \omega_k\right) \in \operatorname{epi}(\phi).
$$
Since $\omega=\sum_{k=1}^K \beta_k \omega_k \geq \sum_{k=1}^K \beta_k g_k$, and the minimum is attained, we know that 
\begin{equation}
    \label{eq:lower_convex_envelope}
    \phi^{* *}(\Gamma)=\inf \left\{\sum_{k=1}^K \beta_k g_k \mid \Gamma=\sum_{k=1}^K \beta_k \Gamma_k, \beta_k \geq 0, \sum \beta_k=1\right\}
\end{equation}
which is precisely the lower convex hull of the point set $\mathcal{P}$. 

\begin{figure}
    \centering
       \includegraphics[width=1.0\linewidth]{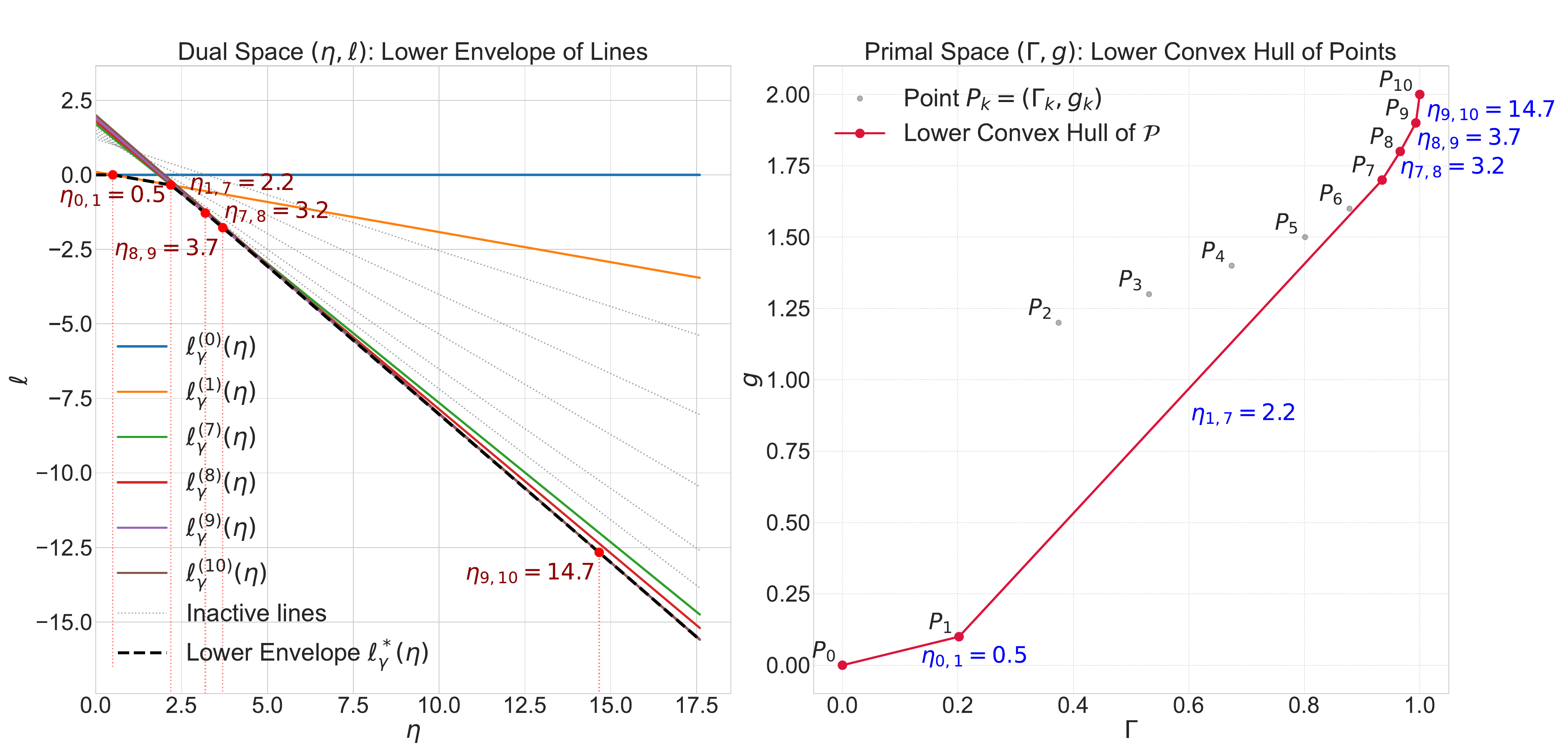}
    \caption{ Primal and dual views, for $\lambda=0.1$ and the example probability vector $\gamma$ is $[0.202, 0.172, 0.157, 0.143, 0.127, 0.077, 0.057, 0.031,0.027,0.007]$. }
    \label{fig:dual_view_example} 
\end{figure}
We now continue with the proof of \Cref{thm:k_mu_properties_formal_revised}.

\begin{lemma}
\label{lem:optimality_on_hull}
An index $k$ is in the set of optimal solutions for some $\eta_0$ (i.e., $\ell_\gamma^{(k)}(\eta_0) = \ell_\gamma^*(\eta_0)$) if and only if its corresponding point $P_k$ lies on the graph of the lower convex hull function $\phi^{**}(\Gamma)$ (i.e., $g_k = \phi^{**}(\Gamma_k)$).
\end{lemma}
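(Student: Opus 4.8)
The plan is to prove both directions of the equivalence by exploiting the convex-conjugacy machinery already established, together with the elementary fact that $\phi^{**}(\Gamma_k) \le \phi(\Gamma_k) = g_k$ for every $k$ (since the biconjugate is always dominated by the original function), so that the only content is characterizing when equality holds. Recall from \eqref{eq:convex_conjugate} that $\phi^*(\eta) = -\ell_\gamma^*(\eta) = \max_{0\le k\le K}\{\eta\Gamma_k - g_k\}$, and from \eqref{eq:biconjugate} that $\phi^{**}(\Gamma) = \sup_{\eta}\{\eta\Gamma - \phi^*(\eta)\} = \sup_\eta\{\eta\Gamma + \ell_\gamma^*(\eta)\}$.

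For the ``only if'' direction: suppose $k$ is optimal at some $\eta_0$, i.e. $\ell_\gamma^{(k)}(\eta_0) = g_k - \eta_0\Gamma_k = \ell_\gamma^*(\eta_0)$. Then $\eta_0\Gamma_k - g_k = -\ell_\gamma^*(\eta_0) = \phi^*(\eta_0)$, so the supremum defining $\phi^*(\eta_0)$ is attained at $k$. Plugging $\eta = \eta_0$ into the biconjugate formula, $\phi^{**}(\Gamma_k) \ge \eta_0\Gamma_k - \phi^*(\eta_0) = \eta_0\Gamma_k - (\eta_0\Gamma_k - g_k) = g_k$. Combined with $\phi^{**}(\Gamma_k)\le g_k$, we get $\phi^{**}(\Gamma_k) = g_k$, i.e. $P_k$ lies on the graph of the lower convex hull.

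For the ``if'' direction: suppose $g_k = \phi^{**}(\Gamma_k)$, i.e. $P_k$ lies on the lower convex hull. Since $\phi^{**} = \overline{\operatorname{co}}(\phi)$ is a closed proper convex function that is finite on $[0,1]$, it has a subgradient $\eta_0$ at the interior-or-boundary point $\Gamma_k$ (finiteness on a neighborhood within $[0,1]$ gives a supporting line; at the endpoints $\Gamma_0 = 0$ or $\Gamma_K = 1$ one uses that a supporting line still exists, possibly after taking a one-sided derivative, and since $g_k$ is bounded the slope is finite). The supporting-line inequality reads $\phi^{**}(\Gamma) \ge g_k + \eta_0(\Gamma - \Gamma_k)$ for all $\Gamma$, hence in particular $g_j = \phi(\Gamma_j) \ge \phi^{**}(\Gamma_j) \ge g_k + \eta_0(\Gamma_j - \Gamma_k)$ for every $j$, which rearranges to $g_j - \eta_0\Gamma_j \ge g_k - \eta_0\Gamma_k$, i.e. $\ell_\gamma^{(j)}(\eta_0) \ge \ell_\gamma^{(k)}(\eta_0)$ for all $j$. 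Thus $k$ minimizes $\ell_\gamma^{(\cdot)}(\eta_0)$, so $\ell_\gamma^{(k)}(\eta_0) = \ell_\gamma^*(\eta_0)$ as desired.

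The main obstacle is the existence of a finite subgradient $\eta_0$ of $\phi^{**}$ at $\Gamma_k$, particularly when $\Gamma_k$ is one of the endpoints $\Gamma_0 = 0$ or $\Gamma_K = 1$ of the domain, where subdifferentiability is not automatic for general convex functions. This is handled by noting that $\phi^{**}$ is piecewise linear (the lower convex hull of finitely many points), finite on all of $[0,1]$, and nondecreasing-in-slope, so its left and right derivatives exist and are finite everywhere on $[0,1]$; at the left endpoint the right derivative supplies a supporting slope and at the right endpoint the left derivative does. Alternatively, one can sidestep subgradients entirely by arguing directly from \eqref{eq:lower_convex_envelope}: if $P_k$ is a vertex of the lower convex hull, pick $\eta_0$ to be a slope strictly between the slopes of the two hull edges incident to $P_k$ (or the single incident edge's slope, perturbed inward, at $v_0$ or $v_m$), and verify by the strict convexity of the piecewise-linear hull that the parallel line of slope $\eta_0$ through $P_k$ lies weakly below all other $P_j$; I will present whichever of these is cleanest.
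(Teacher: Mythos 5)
Your proposal is correct and follows essentially the same route as the paper's proof: the forward direction uses the biconjugate inequality $\phi^{**}(\Gamma_k) \ge \eta_0\Gamma_k + \ell_\gamma^*(\eta_0) = g_k$ together with $\phi^{**}\le\phi$, and the reverse direction takes a supporting slope of $\phi^{**}$ at $\Gamma_k$ and compares against all $P_j$. Your additional care about subgradient existence at the endpoints $\Gamma_0,\Gamma_K$ (resolved via piecewise linearity and finiteness) fills a small gap the paper glosses over by simply invoking the Supporting Hyperplane Theorem.
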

\begin{proof}
 $(\implies)$ Assume $\ell_\gamma^{(k)}(\eta_0) = \ell_\gamma^*(\eta_0)$. By definition, this implies $g_k - \eta_0 \Gamma_k = \ell_\gamma^*(\eta_0)$. From the biconjugate \eqref{eq:biconjugate}, $\phi^{**}(\Gamma_k) = \sup_{\eta} \{\Gamma_k \eta + \ell_\gamma^{*}(\eta)\} \ge \Gamma_k \eta_0 + \ell_\gamma^*(\eta_0) = g_k$. 
 Since $\phi^{**}$ is the lower convex hull function of $\mathcal{P}$, we must also have $\phi^{**}(\Gamma_k) \le g_k$. Therefore, $g_k = \phi^{**}(\Gamma_k)$.

$(\impliedby)$ Assume that the point $P_k=(\Gamma_k, g_k)$ lies on the graph of the lower convex hull, i.e., $g_k = \phi^{**}(\Gamma_k)$. 
By the Supporting Hyperplane Theorem \citep{rockafellar1997convex},
$P_k$ being on the lower boundary of convex hull implies that there exists a supporting line to the function $\phi^{**}$ at the point $\Gamma = \Gamma_k$. Let the slope of this supporting line be $\eta_k$.
Then for all $\Gamma$ in the domain, 
we have
$
\phi^{**}(\Gamma) \ge \phi^{**}(\Gamma_k) + \eta_k (\Gamma - \Gamma_k).
$

For any $j \in \{0, \dots, K\}$, $P_j=(\Gamma_j, g_j)$ must lie on or above the lower convex hull, i.e., $g_j \ge \phi^{**}(\Gamma_j)$. Applying this to the inequality above for $\Gamma = \Gamma_j$, we find:
$$
g_j \ge \phi^{**}(\Gamma_j) \ge \phi^{**}(\Gamma_k) + \eta_k (\Gamma_j - \Gamma_k).
$$
By our initial assumption $\phi^{**}(\Gamma_k) = g_k$, then we have
$
g_j \ge g_k + \eta_k (\Gamma_j - \Gamma_k).
$
This inequality holds for all $j \in \{0, \dots, K\}$. 
Thus,
$
g_j - \eta_k \Gamma_j \ge g_k - \eta_k \Gamma_k,
$
that is, $\ell_\gamma^{(j)}(\eta_k) \ge \ell_\gamma^{(k)}(\eta_k)$ for all $j \in \{0, \dots, K\}$. 
Therefore, $\ell_\gamma^{(k)}(\eta_k)=\ell_\gamma^*(\eta_k)$, i.e., $k$ is an optimal index for $\eta = \eta_k$.
\end{proof}

Recall that the vertices of the lower convex hull of $\mathcal{P}$ are $\{P_{v_0}, P_{v_1}, \dots, P_{v_m}\}$, where $0=v_0 < v_1 < \dots < v_m=K$ are indices from $\{0, \dots, K\}$. 
Recall from Section \ref{geo} that 
for $i=1, \dots, m$, the slope of the edge connecting vertex $P_{v_{i-1}}$ and $P_{v_i}$ is defined as
$\eta_i := \frac{g_{v_i} - g_{v_{i-1}}}{\Gamma_{v_i} - \Gamma_{v_{i-1}}}$
where we define  $\eta_0:=0$ and $\eta_{m+1}:=+\infty$.
From the definition of convexity,
it follows that 
these slopes are strictly increasing:
$0< \eta_1 < \eta_2 < \dots < \eta_m <\infty$.
Our next result is the following: 

\begin{lemma}[Unique Optimality on Vertex Intervals]
\label{lem:unique_optimality}
For any $\eta \in (\eta_i, \eta_{i+1})$, we have $\ell_\gamma^{(v_i)}(\eta) < \ell_\gamma^{(k)}(\eta)$ for all $k \neq v_i$.
\end{lemma}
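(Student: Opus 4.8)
The plan is to prove \Cref{lem:unique_optimality} by leveraging \Cref{lem:optimality_on_hull} together with the convexity structure of the lower convex hull. First I would observe that $v_i$ is a vertex of the lower convex hull, so by \Cref{lem:optimality_on_hull} it is an optimal index for some value of $\eta$; in fact, the supporting lines at the vertex $P_{v_i}$ have slopes ranging over the closed interval $[\eta_i, \eta_{i+1}]$, since the two edges adjacent to $P_{v_i}$ have slopes $\eta_i$ (to the left) and $\eta_{i+1}$ (to the right). The argument in the $(\impliedby)$ direction of \Cref{lem:optimality_on_hull} shows that for any slope $\eta_k$ of a supporting line at $P_{v_i}$, we get $\ell_\gamma^{(j)}(\eta_k) \ge \ell_\gamma^{(v_i)}(\eta_k)$ for all $j$. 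So $v_i$ is optimal for every $\eta \in [\eta_i, \eta_{i+1}]$, and in particular on the open interval $(\eta_i, \eta_{i+1})$.

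The remaining work is to upgrade this to \emph{strict} optimality, i.e., to show $\ell_\gamma^{(v_i)}(\eta) < \ell_\gamma^{(k)}(\eta)$ for every $k \neq v_i$ when $\eta$ is in the open interval. I would argue by contradiction: suppose $\ell_\gamma^{(k)}(\eta_0) = \ell_\gamma^{(v_i)}(\eta_0) = \ell_\gamma^*(\eta_0)$ for some $k \neq v_i$ and some $\eta_0 \in (\eta_i, \eta_{i+1})$. Then both $P_k$ and $P_{v_i}$ lie on the graph of $\phi^{**}$ (again by \Cref{lem:optimality_on_hull}), and moreover the line through $P_{v_i}$ with slope $\eta_0$ passes through $P_k$ as well, since $g_k - \eta_0\Gamma_k = g_{v_i} - \eta_0\Gamma_{v_i}$. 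Hence the entire segment $[P_{v_i}, P_k]$ lies on the graph of the convex function $\phi^{**}$, and this segment has slope $\eta_0$. But a segment of the lower convex hull with slope $\eta_0$ must be contained in one of the hull edges, whose slopes are exactly $\eta_1 < \cdots < \eta_m$; since $\eta_0 \in (\eta_i, \eta_{i+1})$ equals none of these, we reach a contradiction.

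The main obstacle I anticipate is being careful about the geometry at the vertices and the case distinctions for the boundary indices $v_0 = 0$ and $v_m = K$, where one of the adjacent "edges" is the degenerate $\eta_0 = 0$ or $\eta_{m+1} = +\infty$; I would handle these by noting that the statement only concerns $\eta \in (\eta_i, \eta_{i+1})$ with $1 \le i \le m-1$ implicitly (or, if $i=0$ or $i=m$ is allowed, the same supporting-line argument goes through with the conventions $\eta_0=0$, $\eta_{m+1}=+\infty$, interpreting the supporting line at $P_0$ or $P_K$ appropriately). A secondary point requiring care is justifying that a slope-$\eta_0$ segment on the graph of $\phi^{**}$ forces $\eta_0$ to be one of the edge slopes — this follows because $\phi^{**}$ is piecewise linear with breakpoints exactly at $\Gamma_{v_0}, \dots, \Gamma_{v_m}$ and slopes strictly increasing across breakpoints, so any linear piece of its graph has slope in $\{\eta_1, \dots, \eta_m\}$, and I would state this explicitly as the key structural fact being used.
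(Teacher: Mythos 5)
Your proof is correct, but it takes a genuinely different route than the paper's. The paper argues directly: for $\eta\in(\eta_i,\eta_{i+1})$ and a hull vertex $v_j$ with $j>i$, convexity of the lower envelope gives $\eta_{i+1}\le (g_{v_j}-g_{v_i})/(\Gamma_{v_j}-\Gamma_{v_i})$, so $\eta<\eta_{i+1}$ forces strict inequality $\ell_\gamma^{(v_i)}(\eta)<\ell_\gamma^{(v_j)}(\eta)$; the case $j<i$ is symmetric using $\eta>\eta_i$; non-hull indices are eliminated by \Cref{lem:optimality_on_hull}. Your argument instead proceeds in two stages: first establishing \emph{weak} optimality of $v_i$ on the closed interval $[\eta_i,\eta_{i+1}]$ via the supporting-hyperplane step from the $(\impliedby)$ direction of \Cref{lem:optimality_on_hull}, and then upgrading to strictness by contradiction—if another index $k$ ties at $\eta_0\in(\eta_i,\eta_{i+1})$, then $P_{v_i}$ and $P_k$ are both on the graph of $\phi^{**}$ and the slope-$\eta_0$ supporting line passes through both, forcing a linear segment of slope $\eta_0$ on the graph, which is impossible since the edge slopes of the lower convex hull are exactly $\{\eta_1,\dots,\eta_m\}$ with strict monotonicity. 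Both arguments rest on the same convexity structure, but the paper's is shorter and avoids the contradiction step by directly comparing slopes; your version makes the geometric picture (supporting lines, collinearity implies edge slope) more explicit, at the cost of needing the auxiliary observation that a nondegenerate segment on a strictly-convex piecewise-linear graph must lie on a single edge, which you correctly flag as the structural fact being used. Your handling of the boundary conventions $\eta_0=0$, $\eta_{m+1}=+\infty$ is also fine and mirrors what the theorem proof needs for the interval $(\eta_m,\infty)$.
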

\begin{proof}
Let $\eta \in (\eta_i, \eta_{i+1})$ for a given $i \in \{0, 1, \dots, m-1\}$. We need to show that $g_k - g_{v_i} > \eta(\Gamma_k - \Gamma_{v_i})$ for any $k \neq v_i$. By Lemma \ref{lem:optimality_on_hull}, any point not on the lower convex hull cannot be optimal, so it suffices to check this for other vertices $P_{v_j}$ where $j \neq i$.

\textbf{Case 1: $j > i$ (i.e., $\Gamma_{v_j} > \Gamma_{v_i}$).}
Since $\phi^{**}$ is convex, 
for any $j > i$, we have
$$ \eta_{i+1} = \frac{g_{v_{i+1}} - g_{v_i}}{\Gamma_{v_{i+1}} - \Gamma_{v_i}} \le \frac{g_{v_j} - g_{v_i}}{\Gamma_{v_j} - \Gamma_{v_i}}.$$
By our choice of $\eta$, we have $\eta < \eta_{i+1}$. Combining these gives $\eta < \frac{g_{v_j} - g_{v_i}}{\Gamma_{v_j} - \Gamma_{v_i}}$. As $\Gamma_{v_j} - \Gamma_{v_i} > 0$, we have
$\eta(\Gamma_{v_j} - \Gamma_{v_i}) < g_{v_j} - g_{v_i}, $
so that $\ell_\gamma^{(v_i)}(\eta) < \ell_\gamma^{(v_j)}(\eta)$.

\textbf{Case 2: $j < i$ (i.e., $\Gamma_{v_j} < \Gamma_{v_i}$).}
Similarly, for any $j < i$, by the convexity of $\phi^{**}$, we have:
$$ \frac{g_{v_i} - g_{v_j}}{\Gamma_{v_i} - \Gamma_{v_j}} \le \frac{g_{v_i} - g_{v_{i-1}}}{\Gamma_{v_i} - \Gamma_{v_{i-1}}} = \eta_i,$$
which implies $\ell_\gamma^{(v_i)}(\eta) < \ell_\gamma^{(v_j)}(\eta)$ as above.

This finishes the proof. 
\end{proof}

\begin{proof}[Proof of Theorem \ref{thm:k_mu_properties_formal_revised}]
From Lemma \ref{lem:unique_optimality}, for any $\eta$ in the open interval $(\eta_i, \eta_{i+1})$, the unique minimizer is $v_i$, so $\kappa(\eta;\gamma) = v_i$. At the boundary points $\eta = \eta_i$ for $i \in \{1, \dots, m\}$, we have $\ell_\gamma^{(v_{i-1})}(\eta_i) = \ell_\gamma^{(v_i)}(\eta_i)$ by definition. 
The proofs in Lemma \ref{lem:unique_optimality} show that for any other vertex $v_j$, $\ell_\gamma^{(v_j)}(\eta_i)$ is strictly greater. Thus, the set of optimal indices is $\{v_{i-1}, v_i\}$. 
By the tie-breaking rule\footnote{When there are multiple solutions, we choose any set that has minimal size, which corresponds to choosing the smallest index for $\kappa(\eta;\gamma)$}, we have $\kappa(\eta_i;\gamma) = v_{i-1}$.

Combining these observations,
 for $i=1, \dots, m-1$,
 and
for any $\eta \in (\eta_i, \eta_{i+1}]$, the optimal index is $\kappa(\eta;\gamma) = v_i$. 
It is clear that $\kappa(0;\gamma) = 0$. Therefore, for 
$\eta \in [0,\eta_1]$, $\kappa(\eta;\gamma)=0$.
By Lemma \ref{lem:unique_optimality}, for $\eta \in (\eta_m, \eta_{m+1})=(\eta_m, \infty)$, $\kappa(\eta;\gamma)=v_m$. This finishes the proof.
\end{proof}
\begin{figure}
    \centering
    \label{fig:k_mu}\includegraphics[width=1.0\linewidth]{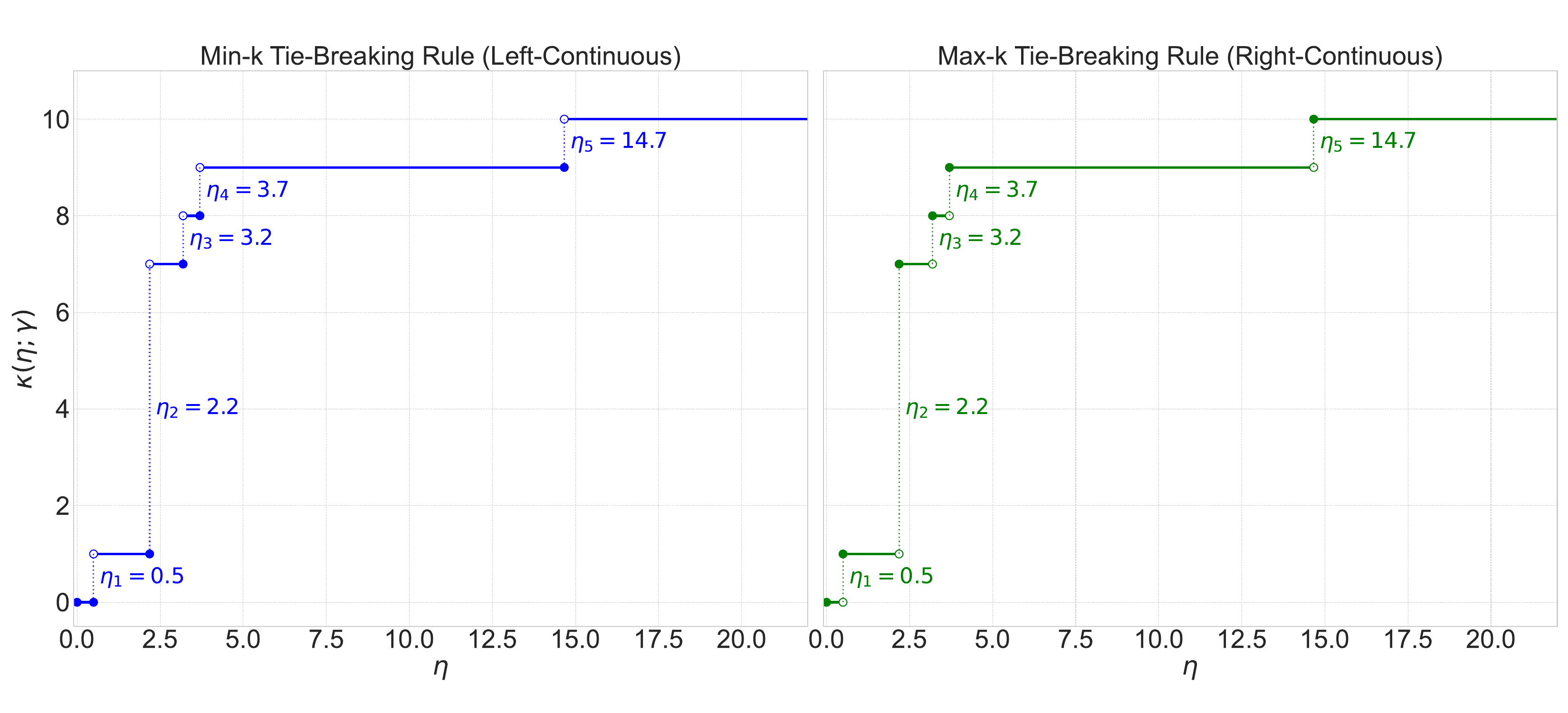}
    \caption{ Optimal Set Size $\kappa(\eta;\gamma)$ with the same parameters as Figure \ref{fig:dual_view_example}. The tie-breaking rule does not affect the value of nonconformity score.}
\end{figure}
\subsection{Proof of Corollary \ref{cor:recovery-special-case}} 
Let $\gamma = \hat{p}(\cdot|x)$ be the probability prediction from some pre-trained model. We order the probabilities such that $\hat{p}(y_1|x) \geq \hat{p}(y_2|x) \geq \hat{p}(y_K|x)$.

(1) When dividing the Lagrangian \eqref{lag} by $\lambda$, the problem remains equivalent by changing variables from $\eta$ to $\tilde{\eta}:=\eta/\lambda$. The resulting $\kappa(\tilde{\eta};\hat{p}(\cdot|x))$ becomes:
\[
\kappa(\tilde{\eta};\hat{p}(\cdot|x)):=\arg \min _{0 \leq k \leq K}\left\{\frac{I(k>1)}{\lambda}+ k-\tilde{\eta} \cdot \sum_{i=1}^k \hat{p}(y_i|x)\right\} .
\]
The corresponding set of points $\mathcal{P}$ becomes
\(
P_k=\left(\sum_{i=1}^k \hat{p}(y_i|x),\frac{I(k>1)}{\lambda}+k\right)
\).
We first consider the simpler case where $\hat{p}(y_1|x)>\hat{p}(y_2|x)>\cdots>\hat{p}(y_K|x)$. When $\lambda\rightarrow\infty$,
\(
P_k\longrightarrow\left(\sum_{i=1}^k \hat{p}(y_i|x),k\right):=\tilde{P}_k.
\)
The slope between two consecutive points is:
\(
1/{\hat{p}(y_{k+1}|x)},
\)
which is strictly increasing in $k$.
Hence, every point $\tilde{P}_k$ is a vertex of the lower convex hull. Thus $\kappa(\tilde{\eta}; \hat{p}(\cdot|x))$ becomes:
\[
\kappa(\tilde{\eta}; \hat{p}(\cdot|x))= \begin{cases}0, & \text { for } \eta \in\left[0, \frac{1}{\hat{p}(y_1|x)}\right] \\ i, & \text { for } \eta \in\left(\frac{1}{\hat{p}(y_i|x)}, \frac{1}{\hat{p}(y_{i+1}|x)}\right], 1 \leqslant i \leqslant K-1 \\ K, & \text { for } \eta \in\left(\frac{1}{\hat{p}(y_K|x)}, \infty\right).\end{cases}
\]
Therefore, the nonconformity score is     \(
    r_{\textnormal{las}}(x,y_i)=1/\hat{p}(y_i|x).
    \)
    
Now, suppose there is a tie, e.g., $\hat{p}\left(y_k \mid x\right)=\hat{p}\left(y_{k+1} \mid x\right)=\cdots=\hat{p}\left(y_{k+m} \mid x\right)$ for some $k \geq 1, m \geq 1$. 
Then, the points $\tilde P_{k-1},\tilde P_k, \ldots,\tilde P_{k+m}$ are collinear,
with vertices $\tilde P_{k-1}$ and $\tilde P_{k+m}$ and slope 
$
\frac{1}{\hat{p}\left(y_k \mid x\right)}.
$
The function $\kappa(\cdot;\hat{p}(\cdot|x))$ exhibits a single jump from $k-1$ to $k+m$ as $\tilde\eta$ crosses this slope value. For any label $y_i$ with $k \leq i \leq k+m$, the nonconformity score is
$$
r_{\text {las}}\left(x, y_i\right)
=\inf \{\tilde\eta \geq 0: \kappa(\tilde\eta;\hat{p}(\cdot|x)) \geq i\}
={1}/{\hat{p}\left(y_k \mid x\right)}
={1}/{\hat{p}\left(y_i \mid x\right)},
$$
as desired.

(2) When $\lambda=0$, the set of points $\mathcal{P}=\left\{P_0, \ldots, P_K\right\}$ becomes
\[
P_0=(0,0),\ P_1=\left(\hat{p}\left(y_1 \mid x\right), 0\right),\ P_k=\left(\sum_{i=1}^k \hat{p}(y_i|x), 1\right)\ (k\ge2),\ P_K=(1,1).
\]
As $\Gamma_k$ is strictly increasing in $k$, 
the vertices of the lower convex hull are $\{P_0,P_1,P_K\}$. The corresponding slopes are
$\eta_1=\frac{0-0}{\hat{p}\left(y_1 \mid x\right)-0}=0$
and 
\[
\eta_2=\frac{g_K-g_1}{\sum_{i=1}^K \hat{p}(y_i|x)-\hat{p}(y_1|x)}=\frac{1}{1-\hat{p}\left(y_1 \mid x\right)}.
\]
Hence, $\kappa(\eta;\hat{p}(\cdot|x))$ becomes:
\[
\kappa(\eta;\hat{p}(\cdot|x)):= 
\begin{cases}
   0, & \textnormal{ for } \eta=0 \\
    1, & \textnormal{ for } \eta\in \left(0,\dfrac{1}{1-\hat{p}\left(y_1 \mid x\right)}\right],\\
    K, & \textnormal{ for } \eta\in \left(\dfrac{1}{1-\hat{p}\left(y_1 \mid x\right)},\infty\right).
\end{cases}
\]
Therefore, by definition of the nonconformity score $r(x,y_i)=\inf \left\{\eta \geqslant 0: \kappa(\eta;\hat{p}(\cdot|x)) \geqslant i\right\}$, we have
$r_{\textnormal{singleton}}(x,y_i) = I(i \geq 2)\bigl(1 - \hat{p}(y_1 \mid x)\bigr)^{-1}$.
\qed

\section{Additional Algorithms}

We leverage the monotone chain algorithm 
\citep{andrew1979another, o1998computational} to find the vertices of lower convex hull, as detailed in Algorithm \ref{alg:compute_lch}.
\begin{algorithm}[ht]
\caption{ Compute Lower Convex Hull via Monotone Chain Algorithm}
\label{alg:compute_lch}
\begin{algorithmic}[1]
\Require Sorted probability vector ${p}$, penalty $\lambda > 0$.
\Ensure A tuple $(\mathcal{V}, {\Gamma}, {g})$ where $\mathcal{V}$ is the list of vertex indices of lower convex hull, ${\Gamma}$ are cumulative sums, and ${g}$ are objective values.
\State Compute cumulative sums $\Gamma_k \leftarrow \sum_{j=1}^k {p}_j$ for $k=0, \ldots, K$ \Comment{$S_0=0$}
\State Compute objective values $g_k\gets I(k>1) + \lambda k$ for $k=0, \ldots, K$
\State Define CrossProduct($j, i, k; {\Gamma}, {g}$)= $(\Gamma_i - \Gamma_j)(g_k - g_i) - (g_i - g_j)(\Gamma_k - \Gamma_i)$.
\State Initialize an empty list of indices $\mathcal{V}$.
\For{$k=0$ \textbf{to} $K$} \Comment{Monotone Chain}
    \While{$|\mathcal{V}| \ge 2$ \textbf{and} $\textnormal{CrossProduct}(\mathcal{V}[-2], \mathcal{V}[-1], k; {\Gamma}, {g}) \le 0$} \Comment{Last two points}
        \State Remove the last index from $\mathcal{V}$.
    \EndWhile
    \State Append index $k$ to $\mathcal{V}$.
\EndFor
\State \Return $(\mathcal{V}, {\Gamma}, {g})$.
\end{algorithmic}
\end{algorithm}

For calibration, we adopt standard split conformal prediction, see \Cref{alg:convex_hull_calibration}.

\begin{algorithm}[ht]
\caption{ SOCOP Conformal Calibration}
\label{alg:convex_hull_calibration}
\begin{algorithmic}[1]
\Require Pre-trained model $\hat{p}$, calibration data $\{(X_i, Y_i)\}_{i=1}^n$, level $\alpha \in (0,1)$, penalty $\lambda>0$.
\Ensure Calibrated threshold $\hat{q}$.

\For{$i = 1$ \textbf{to} $n$}
    \State Sort $\hat{{p}}(\cdot|X_i)$ to get $\hat{{p}}_{\textnormal{sorted}}(\cdot|X_i)$ 
    \State Let $i_{\textnormal{rank}}$ be the 1-based rank of the true label $Y_i$
    \State $(\mathcal{V}, {\Gamma}, {g}) \gets \textnormal{Algorithm \ref{alg:compute_lch}}(\hat{{p}}_{\textnormal{sorted}}(\cdot|X_i), \lambda)$
    \State Find the smallest index $j \in \{1, \dots, |\mathcal{V}|-1\}$ such that $\mathcal{V}[j] \ge i_{\textnormal{rank}}$
    \State $v_- \gets \mathcal{V}[j-1]$; \quad $v_+ \gets \mathcal{V}[j]$
    \State $r_i \gets \left(g_{v_+} - g_{v_-}\right)/\left(\Gamma_{v_+} - \Gamma_{v_-}\right)$
\EndFor
\State $\hat{q} \gets \text { the }\lceil(1-\alpha)(1+n)\rceil \text { largest value in } \{r_i\}_{i=1}^n$
\State \Return $\hat{q}$
\end{algorithmic}
\end{algorithm}

\section{Additional Experiments Results}
\label{extra-exp}
\subsection{ImageNet-Val}
\label{app:imgval-additional}
Results for \texttt{EfficientNet-v2-l}, \texttt{ConvNeXt-base},  and \texttt{Swin-v2-b} on the ImageNet-Val dataset are reported in Table \ref{tab:imgnet-val-cont}.

\begin{table}
\centering
\caption{ Continuation of the results in Table~\ref{tab:imgnet-val} with the same protocol used.}
\label{tab:imgnet-val-cont}
\begin{tabular}{l l c c c}
\toprule
Model & Method & Coverage & Avg Size & $P(\textnormal{size}>1)$ \\
\midrule
\multirow{5}{*}{\texttt{EfficientNet-v2-l}}
  & \texttt{Plug-In}              & $0.970 \pm 0.002$ & \cellcolor{red!30}$16.606 \pm 2.023$  & \cellcolor{red!30}$0.401 \pm 0.013$ \\
  & \texttt{RAPS}                 & $0.950 \pm 0.002$ & \cellcolor{red!30}$1.909 \pm 0.077$   & \cellcolor{red!30}$0.769 \pm 0.076$ \\
  & \texttt{Pure Singleton}  & $0.950 \pm 0.002$ & \cellcolor{red!30}$188.942 \pm 4.836$ & \cellcolor{green!30}$0.188 \pm 0.005$ \\
  & \texttt{Least Ambiguous Sets} & $0.950 \pm 0.002$ & \cellcolor{green!30}$1.542 \pm 0.018$ & \cellcolor{red!30}$0.329 \pm 0.007$ \\
  & \textbf{\texttt{SOCOP}} (ours)& $0.950 \pm 0.002$ & \cellcolor{green!15}$1.659 \pm 0.023$ & \cellcolor{green!15}$0.262 \pm 0.006$ \\
\midrule
\multirow{5}{*}{\texttt{ConvNeXt-base}}
  & \texttt{Plug-In}              & $0.967 \pm 0.003$ & \cellcolor{red!30}$27.137 \pm 5.790$  & \cellcolor{red!30}$0.444 \pm 0.030$ \\
  & \texttt{RAPS}                 & $0.950 \pm 0.003$ & \cellcolor{red!30}$2.546 \pm 0.096$   & \cellcolor{red!30}$0.843 \pm 0.234$ \\
  & \texttt{Pure Singleton}  & $0.950 \pm 0.002$ & \cellcolor{red!30}$226.991 \pm 4.935$ & \cellcolor{green!30}$0.226 \pm 0.005$ \\
  & \texttt{Least Ambiguous Sets} & $0.950 \pm 0.002$ & \cellcolor{green!30}$1.897 \pm 0.034$ & \cellcolor{red!30}$0.398 \pm 0.007$ \\
  & \textbf{\texttt{SOCOP}} (ours)& $0.950 \pm 0.002$ & \cellcolor{green!15}$2.086 \pm 0.046$ & \cellcolor{green!15}$0.316 \pm 0.007$ \\
\midrule
\multirow{5}{*}{\texttt{Swin-v2-b}}
  & \texttt{Plug-In}              & $0.968 \pm 0.003$ & \cellcolor{red!30}$19.646 \pm 3.701$  & \cellcolor{red!30}$0.423 \pm 0.023$ \\
  & \texttt{RAPS}                 & $0.950 \pm 0.002$ & \cellcolor{red!30}$2.314 \pm 0.062$   & \cellcolor{red!30}$0.477 \pm 0.121$ \\
  & \texttt{Pure Singleton}  & $0.950 \pm 0.002$ & \cellcolor{red!30}$225.685 \pm 4.909$ & \cellcolor{green!30}$0.225 \pm 0.005$ \\
  & \texttt{Least Ambiguous Sets} & $0.950 \pm 0.002$ & \cellcolor{green!30}$1.881 \pm 0.033$ & \cellcolor{red!30}$0.396 \pm 0.007$ \\
  & \textbf{\texttt{SOCOP}} (ours)& $0.950 \pm 0.002$ & \cellcolor{green!15}$2.068 \pm 0.038$ & \cellcolor{green!15}$0.316 \pm 0.007$ \\
\bottomrule
\end{tabular}
\end{table}

For this dataset, the effect of $\lambda$ on our \texttt{SOCOP} across all five models are reported in Table \ref{tab:resnet_lambda_imgval}-\ref{tab:vith14_lambda_imgval}, respectively.

\begin{table}
\centering
\caption{ Performance of \texttt{ResNet152-v2} on ImageNet-Val with different $\lambda$ values ($\alpha = 0.05$). Results are averaged over 100 data splits.}
\label{tab:resnet_lambda_imgval}
 \vspace{-1em}
\begin{tabular}{c c c c c}
\toprule
$\lambda$ & Method & Coverage & Avg Size & $P(\textnormal{size}>1)$ \\
\midrule
0 & \texttt{Pure Singleton} & $0.949 \pm 0.002$ & $249.453 \pm 4.960$ & ${0.249 \pm 0.005}$ \\
0.01 & \texttt{SOCOP} (ours)  & $0.950 \pm 0.002$ & $5.078 \pm 0.200$ & $0.279 \pm 0.005$ \\
0.02 & \texttt{SOCOP} (ours)  & $0.950 \pm 0.002$ & $3.932 \pm 0.125$ & $0.293 \pm 0.005$ \\
0.03 & \texttt{SOCOP} (ours)  & $0.950 \pm 0.002$ & $3.508 \pm 0.103$ & $0.302 \pm 0.006$ \\
0.04 & \texttt{SOCOP} (ours)  & $0.950 \pm 0.002$ & $3.267 \pm 0.104$ & $0.309 \pm 0.006$ \\
0.05 & \texttt{SOCOP} (ours)  & $0.950 \pm 0.002$ & $3.110 \pm 0.092$ & $0.315 \pm 0.006$ \\
0.06 & \texttt{SOCOP} (ours)  & $0.950 \pm 0.002$ & $3.002 \pm 0.081$ & $0.321 \pm 0.006$ \\
0.07 & \texttt{SOCOP} (ours)  & $0.950 \pm 0.002$ & $2.916 \pm 0.074$ & $0.325 \pm 0.006$ \\
0.08 & \texttt{SOCOP} (ours)  & $0.950 \pm 0.002$ & $2.847 \pm 0.071$ & $0.329 \pm 0.006$ \\
0.09 & \texttt{SOCOP} (ours)  & $0.950 \pm 0.002$ & $2.795 \pm 0.069$ & $0.332 \pm 0.006$ \\
0.10 & \texttt{SOCOP} (ours)  & $0.950 \pm 0.002$ & $2.749 \pm 0.068$ & $0.336 \pm 0.006$ \\
0.20 & \texttt{SOCOP} (ours)  & $0.950 \pm 0.002$ & $2.527 \pm 0.061$ & $0.360 \pm 0.006$ \\
0.30 & \texttt{SOCOP} (ours)  & $0.949 \pm 0.002$ & $2.461 \pm 0.058$ & $0.376 \pm 0.006$ \\
0.40 & \texttt{SOCOP} (ours)  & $0.950 \pm 0.002$ & $2.430 \pm 0.059$ & $0.388 \pm 0.007$ \\
0.50 & \texttt{SOCOP} (ours)  & $0.950 \pm 0.002$ & $2.406 \pm 0.054$ & $0.396 \pm 0.006$ \\
0.60 & \texttt{SOCOP} (ours)  & $0.950 \pm 0.002$ & $2.388 \pm 0.051$ & $0.403 \pm 0.006$ \\
0.70 & \texttt{SOCOP} (ours)  & $0.950 \pm 0.002$ & $2.373 \pm 0.051$ & $0.408 \pm 0.006$ \\
0.80 & \texttt{SOCOP} (ours)  & $0.950 \pm 0.002$ & $2.364 \pm 0.051$ & $0.412 \pm 0.006$ \\
0.90 & \texttt{SOCOP} (ours)  & $0.950 \pm 0.002$ & $2.355 \pm 0.053$ & $0.416 \pm 0.007$ \\
1.00 & \texttt{SOCOP} (ours)  & $0.950 \pm 0.002$ & $2.349 \pm 0.054$ & $0.419 \pm 0.007$ \\
$\infty$ & \texttt{Least Ambiguous Sets} & $0.950 \pm 0.002$ & ${2.274 \pm 0.046}$ & $0.466 \pm 0.007$ \\
\bottomrule
\end{tabular}
\end{table}

\begin{table}
\centering
\caption{ Performance of \texttt{EfficientNet-v2-l} on ImageNet-Val with different $\lambda$ values ($\alpha=0.05$). Results are averaged over 100 data splits.}
\label{tab:efficientnet_lambda_imgval}
 \vspace{-1em}
\begin{tabular}{c c c c c}
\toprule
$\lambda$ & Method & Coverage & Avg Size & $P(\textnormal{size}>1)$ \\
\midrule
0 & \texttt{Pure Singleton} & $0.950 \pm 0.002$ & $188.942 \pm 4.836$ & ${0.188 \pm 0.005}$ \\
0.01 & \texttt{SOCOP} (ours)  & $0.950 \pm 0.002$ & $2.873 \pm 0.091$ & $0.205 \pm 0.005$ \\
0.02 & \texttt{SOCOP} (ours)  & $0.950 \pm 0.002$ & $2.326 \pm 0.060$ & $0.212 \pm 0.005$ \\
0.03 & \texttt{SOCOP} (ours)  & $0.950 \pm 0.002$ & $2.122 \pm 0.053$ & $0.217 \pm 0.006$ \\
0.04 & \texttt{SOCOP} (ours)  & $0.950 \pm 0.002$ & $2.012 \pm 0.042$ & $0.221 \pm 0.005$ \\
0.05 & \texttt{SOCOP} (ours)  & $0.950 \pm 0.002$ & $1.946 \pm 0.038$ & $0.226 \pm 0.005$ \\
0.06 & \texttt{SOCOP} (ours)  & $0.950 \pm 0.002$ & $1.898 \pm 0.035$ & $0.229 \pm 0.005$ \\
0.07 & \texttt{SOCOP} (ours)  & $0.950 \pm 0.002$ & $1.859 \pm 0.034$ & $0.232 \pm 0.006$ \\
0.08 & \texttt{SOCOP} (ours)  & $0.950 \pm 0.002$ & $1.828 \pm 0.033$ & $0.234 \pm 0.006$ \\
0.09 & \texttt{SOCOP} (ours)  & $0.950 \pm 0.002$ & $1.802 \pm 0.031$ & $0.236 \pm 0.006$ \\
0.10 & \texttt{SOCOP} (ours)  & $0.950 \pm 0.002$ & $1.782 \pm 0.028$ & $0.238 \pm 0.005$ \\
0.20 & \texttt{SOCOP} (ours)  & $0.950 \pm 0.002$ & $1.678 \pm 0.023$ & $0.255 \pm 0.005$ \\
0.30 & \texttt{SOCOP} (ours)  & $0.950 \pm 0.002$ & $1.640 \pm 0.022$ & $0.265 \pm 0.006$ \\
0.40 & \texttt{SOCOP} (ours)  & $0.950 \pm 0.002$ & $1.621 \pm 0.021$ & $0.274 \pm 0.006$ \\
0.50 & \texttt{SOCOP} (ours)  & $0.950 \pm 0.002$ & $1.608 \pm 0.023$ & $0.279 \pm 0.006$ \\
0.60 & \texttt{SOCOP} (ours)  & $0.950 \pm 0.002$ & $1.596 \pm 0.023$ & $0.283 \pm 0.007$ \\
0.70 & \texttt{SOCOP} (ours)  & $0.950 \pm 0.002$ & $1.587 \pm 0.022$ & $0.286 \pm 0.007$ \\
0.80 & \texttt{SOCOP} (ours)  & $0.950 \pm 0.002$ & $1.581 \pm 0.022$ & $0.289 \pm 0.006$ \\
0.90 & \texttt{SOCOP} (ours)  & $0.950 \pm 0.002$ & $1.575 \pm 0.022$ & $0.291 \pm 0.007$ \\
1.00 & \texttt{SOCOP} (ours)  & $0.950 \pm 0.002$ & $1.571 \pm 0.022$ & $0.293 \pm 0.007$ \\
$\infty$ & \texttt{Least Ambiguous Sets} & $0.950 \pm 0.002$ & $1.542 \pm 0.018$ & $0.329 \pm 0.007$ \\
\bottomrule
\end{tabular}
\end{table}

\begin{table}
\centering
\caption{ Performance of \texttt{ConvNeXt-base} on ImageNet-Val with different $\lambda$ values ($\alpha = 0.05$). Results are averaged over 100 data splits.}
\label{tab:convnext_lambda_imgval}
 \vspace{-1em}
\begin{tabular}{c c c c c}
\toprule
$\lambda$ & Method & Coverage & Avg Size & $P(\textnormal{size}>1)$ \\
\midrule
0 & \texttt{Pure Singleton} & $0.950 \pm 0.002$ & $226.991 \pm 4.935$ & ${0.226 \pm 0.005}$ \\
0.01 & \texttt{SOCOP} (ours)  & $0.950 \pm 0.002$ & $4.074 \pm 0.155$ & $0.244 \pm 0.005$ \\
0.02 & \texttt{SOCOP} (ours)  & $0.950 \pm 0.002$ & $3.194 \pm 0.097$ & $0.254 \pm 0.005$ \\
0.03 & \texttt{SOCOP} (ours)  & $0.950 \pm 0.002$ & $2.866 \pm 0.081$ & $0.262 \pm 0.005$ \\
0.04 & \texttt{SOCOP} (ours)  & $0.950 \pm 0.002$ & $2.683 \pm 0.068$ & $0.268 \pm 0.005$ \\
0.05 & \texttt{SOCOP} (ours)  & $0.950 \pm 0.002$ & $2.567 \pm 0.067$ & $0.274 \pm 0.005$ \\
0.06 & \texttt{SOCOP} (ours)  & $0.950 \pm 0.002$ & $2.485 \pm 0.065$ & $0.278 \pm 0.006$ \\
0.07 & \texttt{SOCOP} (ours)  & $0.950 \pm 0.002$ & $2.429 \pm 0.062$ & $0.283 \pm 0.006$ \\
0.08 & \texttt{SOCOP} (ours)  & $0.950 \pm 0.002$ & $2.383 \pm 0.063$ & $0.287 \pm 0.006$ \\
0.09 & \texttt{SOCOP} (ours)  & $0.950 \pm 0.002$ & $2.344 \pm 0.060$ & $0.290 \pm 0.006$ \\
0.10 & \texttt{SOCOP} (ours)  & $0.949 \pm 0.002$ & $2.307 \pm 0.056$ & $0.293 \pm 0.006$ \\
0.20 & \texttt{SOCOP} (ours)  & $0.950 \pm 0.002$ & $2.121 \pm 0.045$ & $0.310 \pm 0.006$ \\
0.30 & \texttt{SOCOP} (ours)  & $0.950 \pm 0.002$ & $2.044 \pm 0.036$ & $0.321 \pm 0.006$ \\
0.40 & \texttt{SOCOP} (ours)  & $0.950 \pm 0.002$ & $2.008 \pm 0.036$ & $0.330 \pm 0.006$ \\
0.50 & \texttt{SOCOP} (ours)  & $0.950 \pm 0.002$ & $1.983 \pm 0.035$ & $0.336 \pm 0.006$ \\
0.60 & \texttt{SOCOP} (ours)  & $0.950 \pm 0.002$ & $1.964 \pm 0.033$ & $0.341 \pm 0.005$ \\
0.70 & \texttt{SOCOP} (ours)  & $0.950 \pm 0.002$ & $1.951 \pm 0.035$ & $0.345 \pm 0.006$ \\
0.80 & \texttt{SOCOP} (ours)  & $0.950 \pm 0.002$ & $1.943 \pm 0.035$ & $0.349 \pm 0.006$ \\
0.90 & \texttt{SOCOP} (ours)  & $0.950 \pm 0.002$ & $1.937 \pm 0.034$ & $0.352 \pm 0.006$ \\
1.00 & \texttt{SOCOP} (ours)  & $0.950 \pm 0.002$ & $1.932 \pm 0.033$ & $0.355 \pm 0.006$ \\
$\infty$ & \texttt{Least Ambiguous Sets} & $0.950 \pm 0.002$ & ${1.897 \pm 0.034}$ & $0.398 \pm 0.007$ \\
\bottomrule
\end{tabular}
 \vspace{-1em}
\end{table}

\begin{table}
\centering
\caption{ Performance of \texttt{Swin-v2-b} on ImageNet-Val with different $\lambda$ values ($\alpha=0.05$). Results are averaged over 100 data splits.}
\label{tab:swinv2_lambda_imgval}
 \vspace{-1em}
\begin{tabular}{c c c c c}
\toprule
$\lambda$ & Method & Coverage & Avg Size & $P(\textnormal{size}>1)$ \\
\midrule
0 & \texttt{Pure Singleton} & $0.950 \pm 0.002$ & $225.685 \pm 4.909$ & ${0.225 \pm 0.005}$ \\
0.01 & \texttt{SOCOP} (ours)  & $0.949 \pm 0.002$ & $3.844 \pm 0.121$ & $0.245 \pm 0.004$ \\
0.02 & \texttt{SOCOP} (ours)  & $0.949 \pm 0.002$ & $3.089 \pm 0.087$ & $0.256 \pm 0.005$ \\
0.03 & \texttt{SOCOP} (ours)  & $0.949 \pm 0.002$ & $2.775 \pm 0.070$ & $0.262 \pm 0.005$ \\
0.04 & \texttt{SOCOP} (ours)  & $0.949 \pm 0.002$ & $2.610 \pm 0.060$ & $0.268 \pm 0.005$ \\
0.05 & \texttt{SOCOP} (ours)  & $0.949 \pm 0.002$ & $2.502 \pm 0.054$ & $0.273 \pm 0.005$ \\
0.06 & \texttt{SOCOP} (ours)  & $0.949 \pm 0.002$ & $2.426 \pm 0.052$ & $0.278 \pm 0.005$ \\
0.07 & \texttt{SOCOP} (ours)  & $0.949 \pm 0.002$ & $2.369 \pm 0.056$ & $0.281 \pm 0.005$ \\
0.08 & \texttt{SOCOP} (ours)  & $0.949 \pm 0.002$ & $2.324 \pm 0.053$ & $0.284 \pm 0.005$ \\
0.09 & \texttt{SOCOP} (ours)  & $0.949 \pm 0.002$ & $2.285 \pm 0.051$ & $0.287 \pm 0.005$ \\
0.10 & \texttt{SOCOP} (ours)  & $0.949 \pm 0.002$ & $2.253 \pm 0.051$ & $0.289 \pm 0.006$ \\
0.20 & \texttt{SOCOP} (ours)  & $0.949 \pm 0.002$ & $2.096 \pm 0.045$ & $0.309 \pm 0.006$ \\
0.30 & \texttt{SOCOP} (ours)  & $0.949 \pm 0.002$ & $2.030 \pm 0.038$ & $0.320 \pm 0.006$ \\
0.40 & \texttt{SOCOP} (ours)  & $0.949 \pm 0.002$ & $1.997 \pm 0.036$ & $0.329 \pm 0.006$ \\
0.50 & \texttt{SOCOP} (ours)  & $0.949 \pm 0.002$ & $1.976 \pm 0.035$ & $0.336 \pm 0.006$ \\
0.60 & \texttt{SOCOP} (ours)  & $0.950 \pm 0.002$ & $1.961 \pm 0.035$ & $0.341 \pm 0.006$ \\
0.70 & \texttt{SOCOP} (ours)  & $0.950 \pm 0.002$ & $1.949 \pm 0.036$ & $0.345 \pm 0.006$ \\
0.80 & \texttt{SOCOP} (ours)  & $0.950 \pm 0.002$ & $1.942 \pm 0.037$ & $0.349 \pm 0.007$ \\
0.90 & \texttt{SOCOP} (ours)  & $0.950 \pm 0.002$ & $1.935 \pm 0.037$ & $0.352 \pm 0.007$ \\
1.00 & \texttt{SOCOP} (ours)  & $0.950 \pm 0.002$ & $1.931 \pm 0.037$ & $0.355 \pm 0.007$ \\
$\infty$ & \texttt{Least Ambiguous Sets} & $0.950 \pm 0.002$ & ${1.881 \pm 0.033}$ & $0.396 \pm 0.007$ \\
\bottomrule
\end{tabular}
 \vspace{-1em}
\end{table}

\begin{table}
\centering
\caption{ Performance of \texttt{ViT-h-14} on ImageNet-Val with different $\lambda$ values ($\alpha=0.05$). Results are averaged over 100 data splits.}
\label{tab:vith14_lambda_imgval}
 \vspace{-1em}
\begin{tabular}{c c c c c}
\toprule
$\lambda$ & Method & Coverage & Avg Size & $P(\textnormal{size}>1)$ \\
\midrule
0 & \texttt{Pure Singleton} & $0.950 \pm 0.002$ & $136.219 \pm 4.980$ & ${0.135 \pm 0.005}$ \\
0.01 & \texttt{SOCOP} (ours)  & $0.950 \pm 0.003$ & $2.061 \pm 0.062$ & $0.141 \pm 0.005$ \\
0.02 & \texttt{SOCOP} (ours)  & $0.950 \pm 0.003$ & $1.761 \pm 0.040$ & $0.145 \pm 0.005$ \\
0.03 & \texttt{SOCOP} (ours)  & $0.950 \pm 0.002$ & $1.641 \pm 0.030$ & $0.148 \pm 0.005$ \\
0.04 & \texttt{SOCOP} (ours)  & $0.950 \pm 0.002$ & $1.572 \pm 0.025$ & $0.151 \pm 0.005$ \\
0.05 & \texttt{SOCOP} (ours)  & $0.950 \pm 0.002$ & $1.529 \pm 0.023$ & $0.153 \pm 0.005$ \\
0.06 & \texttt{SOCOP} (ours)  & $0.950 \pm 0.002$ & $1.498 \pm 0.023$ & $0.155 \pm 0.005$ \\
0.07 & \texttt{SOCOP} (ours)  & $0.950 \pm 0.002$ & $1.474 \pm 0.022$ & $0.156 \pm 0.005$ \\
0.08 & \texttt{SOCOP} (ours)  & $0.950 \pm 0.002$ & $1.457 \pm 0.021$ & $0.158 \pm 0.005$ \\
0.09 & \texttt{SOCOP} (ours)  & $0.950 \pm 0.002$ & $1.444 \pm 0.021$ & $0.159 \pm 0.005$ \\
0.10 & \texttt{SOCOP} (ours)  & $0.950 \pm 0.002$ & $1.432 \pm 0.022$ & $0.161 \pm 0.005$ \\
0.20 & \texttt{SOCOP} (ours)  & $0.950 \pm 0.002$ & $1.368 \pm 0.016$ & $0.171 \pm 0.005$ \\
0.30 & \texttt{SOCOP} (ours)  & $0.950 \pm 0.002$ & $1.344 \pm 0.015$ & $0.177 \pm 0.005$ \\
0.40 & \texttt{SOCOP} (ours)  & $0.950 \pm 0.002$ & $1.332 \pm 0.014$ & $0.183 \pm 0.005$ \\
0.50 & \texttt{SOCOP} (ours)  & $0.950 \pm 0.002$ & $1.324 \pm 0.014$ & $0.187 \pm 0.005$ \\
0.60 & \texttt{SOCOP} (ours)  & $0.950 \pm 0.002$ & $1.319 \pm 0.013$ & $0.190 \pm 0.005$ \\
0.70 & \texttt{SOCOP} (ours)  & $0.950 \pm 0.002$ & $1.315 \pm 0.012$ & $0.193 \pm 0.005$ \\
0.80 & \texttt{SOCOP} (ours)  & $0.950 \pm 0.002$ & $1.312 \pm 0.013$ & $0.195 \pm 0.006$ \\
0.90 & \texttt{SOCOP} (ours)  & $0.950 \pm 0.002$ & $1.309 \pm 0.014$ & $0.197 \pm 0.006$ \\
1.00 & \texttt{SOCOP} (ours)  & $0.950 \pm 0.002$ & $1.307 \pm 0.014$ & $0.198 \pm 0.006$ \\
$\infty$ & \texttt{Least Ambiguous Sets} & $0.950 \pm 0.002$ & ${1.291 \pm 0.011}$ & $0.224 \pm 0.006$ \\
\bottomrule
\end{tabular}
 \vspace{-1em}
\end{table}

\subsection{ImageNet-V2}
\label{app:imgv2-additional}
Results for all five models on the ImageNet-V2 dataset are reported in Table \ref{tab:imgnet-v2}.
\begin{table}
\centering
\caption{ Performance on ImageNet-V2,  with a protocol identical to that in Table \ref{tab:imgnet-val}}
\label{tab:imgnet-v2}
\begin{tabular}{l l c c c}
\toprule
Model & Method & Coverage & Avg Size & $P(\textnormal{size}>1)$ \\
\midrule
\multirow{5}{*}{\texttt{ResNet152-v2}}
  & \texttt{Plug-In}              & $0.975 \pm 0.004$ & \cellcolor{red!30}$190.453 \pm 23.837$ & \cellcolor{red!30}$0.839 \pm 0.035$ \\
  & \texttt{RAPS}                 & $0.950 \pm 0.004$ & \cellcolor{red!30}$11.524 \pm 0.793$   & \cellcolor{red!30}$1.000 \pm 0.000$ \\
  & \texttt{Pure Singleton}  & $0.950 \pm 0.005$ & \cellcolor{red!30}$432.673 \pm 12.869$ & \cellcolor{green!30}$0.432 \pm 0.013$ \\
  & \texttt{Least Ambiguous Sets} & $0.949 \pm 0.005$ & \cellcolor{green!30}$9.067 \pm 0.677$  & \cellcolor{red!30}$0.798 \pm 0.011$ \\
  & \textbf{\texttt{SOCOP}} (ours)& $0.950 \pm 0.005$ & \cellcolor{green!15}$10.212 \pm 1.099$ & \cellcolor{green!15}$0.655 \pm 0.031$ \\

\midrule
\multirow{5}{*}{\texttt{EfficientNet-v2-l}}
  & \texttt{Plug-In}              & $0.963 \pm 0.004$ & \cellcolor{red!30}$70.999 \pm 7.761$   & \cellcolor{red!30}$0.661 \pm 0.022$ \\
  & \texttt{RAPS}                 & $0.950 \pm 0.005$ & \cellcolor{red!30}$5.947 \pm 0.485$    & \cellcolor{red!30}$0.920 \pm 0.069$ \\
  & \texttt{Pure Singleton}  & $0.950 \pm 0.005$ & \cellcolor{red!30}$369.726 \pm 14.194$ & \cellcolor{green!30}$0.369 \pm 0.014$ \\
  & \texttt{Least Ambiguous Sets} & $0.950 \pm 0.004$ & \cellcolor{green!30}$4.157 \pm 0.231$  & \cellcolor{red!30}$0.718 \pm 0.016$ \\
  & \textbf{\texttt{SOCOP}} (ours)& $0.950 \pm 0.004$ & \cellcolor{green!15}$4.736 \pm 0.354$  & \cellcolor{green!15}$0.571 \pm 0.024$ \\
\midrule
\multirow{5}{*}{\texttt{ConvNeXt-base}}
  & \texttt{Plug-In}              & $0.971 \pm 0.005$ & \cellcolor{red!30}$161.625 \pm 22.907$ & \cellcolor{red!30}$0.845 \pm 0.030$ \\
  & \texttt{RAPS}                 & $0.950 \pm 0.005$ & \cellcolor{red!30}$10.380 \pm 0.819$   & \cellcolor{red!30}$1.000 \pm 0.000$ \\
  & \texttt{Pure Singleton}  & $0.950 \pm 0.004$ & \cellcolor{red!30}$428.852 \pm 14.761$ & \cellcolor{green!30}$0.428 \pm 0.015$ \\
  & \texttt{Least Ambiguous Sets} & $0.950 \pm 0.005$ & \cellcolor{green!30}$6.810 \pm 0.492$  & \cellcolor{red!30}$0.787 \pm 0.016$ \\
  & \textbf{\texttt{SOCOP}} (ours)& $0.950 \pm 0.005$ & \cellcolor{green!15}$7.578 \pm 0.558$  & \cellcolor{green!15}$0.629 \pm 0.016$ \\
\midrule
\multirow{5}{*}{\texttt{Swin-v2-b}}
  & \texttt{Plug-In}              & $0.981 \pm 0.004$ & \cellcolor{red!30}$166.462 \pm 26.307$ & \cellcolor{red!30}$0.941 \pm 0.032$ \\
  & \texttt{RAPS}                 & $0.951 \pm 0.005$ & \cellcolor{red!30}$9.306 \pm 0.860$    & \cellcolor{red!30}$1.000 \pm 0.000$ \\
  & \texttt{Pure Singleton}  & $0.950 \pm 0.005$ & \cellcolor{red!30}$414.604 \pm 13.283$ & \cellcolor{green!30}$0.414 \pm 0.013$ \\
  & \texttt{Least Ambiguous Sets} & $0.950 \pm 0.004$ & \cellcolor{green!30}$6.673 \pm 0.472$  & \cellcolor{red!30}$0.777 \pm 0.017$ \\
  & \textbf{\texttt{SOCOP}} (ours)& $0.950 \pm 0.005$ & \cellcolor{green!15}$7.634 \pm 0.703$  & \cellcolor{green!15}$0.626 \pm 0.021$ \\
  \midrule
\multirow{5}{*}{\texttt{ViT-h-14}}
  & \texttt{Plug-In}              & $0.965 \pm 0.003$ & \cellcolor{red!30}$33.017 \pm 3.027$   & \cellcolor{red!30}$0.540 \pm 0.013$ \\
  & \texttt{RAPS}                 & $0.951 \pm 0.005$ & \cellcolor{red!30}$3.259 \pm 0.264$    & \cellcolor{red!30}$0.979 \pm 0.092$ \\
  & \texttt{Pure Singleton}  & $0.950 \pm 0.004$ & \cellcolor{red!30}$304.159 \pm 13.851$ & \cellcolor{green!30}$0.304 \pm 0.014$ \\
  & \texttt{Least Ambiguous Sets} & $0.950 \pm 0.005$ & \cellcolor{green!30}$2.378 \pm 0.105$  & \cellcolor{red!30}$0.539 \pm 0.018$ \\
  & \textbf{\texttt{SOCOP}} (ours)& $0.950 \pm 0.005$ & \cellcolor{green!15}$2.695 \pm 0.165$  & \cellcolor{green!15}$0.421 \pm 0.024$ \\
\bottomrule
\end{tabular}
\end{table}

For this dataset, the effect of $\lambda$ on our \texttt{SOCOP} across all five models are reported in Table \ref{tab:resnet152_lambda_imgv2}-\ref{tab:vith14_lambda_imgv2}, respectively.

\begin{table}
\centering
\caption{ Performance of \texttt{ResNet152-v2} on ImageNet-V2 with different $\lambda$ values ($\alpha=0.05$). Results are averaged over 100 data splits.}
\label{tab:resnet152_lambda_imgv2}
\begin{tabular}{c c c c c}
\toprule
$\lambda$ & Method & Coverage & Avg Size & $P(\textnormal{size}>1)$ \\
\midrule
0&\texttt{Pure Singleton}        & $0.950 \pm 0.005$ & $432.673 \pm 12.869$ & $0.432 \pm 0.013$ \\
0.01 & \texttt{SOCOP} (ours)  & $0.950 \pm 0.006$ & $25.474 \pm 3.292$ & $0.502 \pm 0.013$ \\
0.02 & \texttt{SOCOP} (ours)  & $0.950 \pm 0.005$ & $16.743 \pm 1.657$ & $0.531 \pm 0.013$ \\
0.03 & \texttt{SOCOP} (ours)  & $0.950 \pm 0.005$ & $14.088 \pm 1.270$ & $0.550 \pm 0.012$ \\
0.04 & \texttt{SOCOP} (ours)  & $0.950 \pm 0.005$ & $13.112 \pm 1.138$ & $0.567 \pm 0.012$ \\
0.05 & \texttt{SOCOP} (ours)  & $0.950 \pm 0.005$ & $12.527 \pm 1.047$ & $0.581 \pm 0.013$ \\
0.06 & \texttt{SOCOP} (ours)  & $0.950 \pm 0.005$ & $12.091 \pm 1.020$ & $0.592 \pm 0.013$ \\
0.07 & \texttt{SOCOP} (ours)  & $0.950 \pm 0.005$ & $11.732 \pm 0.974$ & $0.601 \pm 0.013$ \\
0.08 & \texttt{SOCOP} (ours)  & $0.950 \pm 0.005$ & $11.432 \pm 0.931$ & $0.609 \pm 0.013$ \\
0.09 & \texttt{SOCOP} (ours)  & $0.950 \pm 0.005$ & $11.171 \pm 0.876$ & $0.615 \pm 0.012$ \\
0.10 & \texttt{SOCOP} (ours)  & $0.950 \pm 0.005$ & $10.920 \pm 0.826$ & $0.620 \pm 0.012$ \\
0.20 & \texttt{SOCOP} (ours)  & $0.950 \pm 0.005$ & $9.949 \pm 0.685$  & $0.660 \pm 0.011$ \\
0.30 & \texttt{SOCOP} (ours)  & $0.950 \pm 0.005$ & $9.745 \pm 0.661$  & $0.684 \pm 0.011$ \\
0.40 & \texttt{SOCOP} (ours)  & $0.950 \pm 0.005$ & $9.616 \pm 0.623$  & $0.699 \pm 0.010$ \\
0.50 & \texttt{SOCOP} (ours)  & $0.950 \pm 0.005$ & $9.555 \pm 0.609$  & $0.711 \pm 0.010$ \\
0.60 & \texttt{SOCOP} (ours)  & $0.950 \pm 0.005$ & $9.508 \pm 0.579$  & $0.720 \pm 0.010$ \\
0.70 & \texttt{SOCOP} (ours)  & $0.950 \pm 0.005$ & $9.427 \pm 0.543$  & $0.726 \pm 0.010$ \\
0.80 & \texttt{SOCOP} (ours)  & $0.950 \pm 0.005$ & $9.344 \pm 0.554$  & $0.730 \pm 0.010$ \\
0.90 & \texttt{SOCOP} (ours)  & $0.950 \pm 0.005$ & $9.290 \pm 0.564$  & $0.735 \pm 0.010$ \\
1.00 & \texttt{SOCOP} (ours)  & $0.950 \pm 0.005$ & $9.262 \pm 0.578$  & $0.739 \pm 0.011$ \\
$\infty$ &\texttt{Least Ambiguous Sets}         & $0.949 \pm 0.005$ & $9.067 \pm 0.677$    & $0.798 \pm 0.011$ \\
\bottomrule
\end{tabular}
\end{table}

\begin{table}
\centering
\caption{ Performance of \texttt{EfficientNet-v2-l} on ImageNet-V2 with different $\lambda$ values ($\alpha=0.05$). Results are averaged over 100 data splits.}
\label{tab:efficientnet_lambda_imgv2}
 \vspace{-1em}
\begin{tabular}{c c c c c}
\toprule
$\lambda$ & Method & Coverage & Avg Size & $P(\textnormal{size}>1)$ \\
\midrule
0 & \texttt{Pure Singleton}        & $0.950 \pm 0.005$ & $369.726 \pm 14.194$ & $0.369 \pm 0.014$ \\
0.01 & \texttt{SOCOP} (ours)        & $0.951 \pm 0.005$ & $12.243 \pm 1.247$   & $0.422 \pm 0.014$ \\
0.02 & \texttt{SOCOP} (ours)        & $0.950 \pm 0.004$ & $8.506 \pm 0.744$    & $0.448 \pm 0.015$ \\
0.03 & \texttt{SOCOP} (ours)        & $0.950 \pm 0.004$ & $7.266 \pm 0.505$    & $0.466 \pm 0.013$ \\
0.04 & \texttt{SOCOP} (ours)        & $0.950 \pm 0.004$ & $6.643 \pm 0.433$    & $0.480 \pm 0.013$ \\
0.05 & \texttt{SOCOP} (ours)        & $0.950 \pm 0.004$ & $6.216 \pm 0.413$    & $0.491 \pm 0.014$ \\
0.06 & \texttt{SOCOP} (ours)        & $0.950 \pm 0.004$ & $5.910 \pm 0.398$    & $0.500 \pm 0.015$ \\
0.07 & \texttt{SOCOP} (ours)        & $0.950 \pm 0.004$ & $5.718 \pm 0.397$    & $0.509 \pm 0.016$ \\
0.08 & \texttt{SOCOP} (ours)        & $0.950 \pm 0.004$ & $5.559 \pm 0.399$    & $0.516 \pm 0.016$ \\
0.09 & \texttt{SOCOP} (ours)        & $0.950 \pm 0.004$ & $5.429 \pm 0.405$    & $0.522 \pm 0.017$ \\
0.10 & \texttt{SOCOP} (ours)        & $0.950 \pm 0.004$ & $5.304 \pm 0.385$    & $0.528 \pm 0.016$ \\
0.20 & \texttt{SOCOP} (ours)        & $0.950 \pm 0.004$ & $4.751 \pm 0.302$    & $0.565 \pm 0.016$ \\
0.30 & \texttt{SOCOP} (ours)        & $0.950 \pm 0.004$ & $4.586 \pm 0.285$    & $0.590 \pm 0.016$ \\
0.40 & \texttt{SOCOP} (ours)        & $0.950 \pm 0.004$ & $4.496 \pm 0.290$    & $0.606 \pm 0.016$ \\
0.50 & \texttt{SOCOP} (ours)        & $0.950 \pm 0.004$ & $4.418 \pm 0.273$    & $0.617 \pm 0.016$ \\
0.60 & \texttt{SOCOP} (ours)        & $0.950 \pm 0.004$ & $4.372 \pm 0.258$    & $0.626 \pm 0.015$ \\
0.70 & \texttt{SOCOP} (ours)        & $0.950 \pm 0.004$ & $4.335 \pm 0.251$    & $0.634 \pm 0.015$ \\
0.80 & \texttt{SOCOP} (ours)        & $0.950 \pm 0.004$ & $4.305 \pm 0.248$    & $0.639 \pm 0.015$ \\
0.90 & \texttt{SOCOP} (ours)        & $0.950 \pm 0.004$ & $4.284 \pm 0.250$    & $0.644 \pm 0.015$ \\
1.00 & \texttt{SOCOP} (ours)        & $0.950 \pm 0.004$ & $4.272 \pm 0.249$    & $0.649 \pm 0.016$ \\
$\infty$ & \texttt{Least Ambiguous Sets}      & $0.950 \pm 0.004$ & $4.157 \pm 0.231$    & $0.718 \pm 0.016$ \\
\bottomrule
\end{tabular}
\end{table}

\begin{table}
\centering
\caption{ Performance of \texttt{ConvNeXt-base} on ImageNet-V2 with different $\lambda$ values ($\alpha=0.05$). Results are averaged over 100 data splits.}
\label{tab:convnext_lambda_imgv2}
 \vspace{-1em}
\begin{tabular}{c c c c c}
\toprule
$\lambda$ & Method & Coverage & Avg Size & $P(\textnormal{size}>1)$ \\
\midrule
0 & \texttt{Pure Singleton}          & $0.950 \pm 0.004$ & $428.852 \pm 14.761$ & $0.428 \pm 0.015$ \\
0.01 & \texttt{SOCOP} (ours)  & $0.950 \pm 0.004$ & $19.963 \pm 2.012$  & $0.465 \pm 0.012$ \\
0.02 & \texttt{SOCOP} (ours)  & $0.950 \pm 0.004$ & $14.093 \pm 1.081$  & $0.498 \pm 0.011$ \\
0.03 & \texttt{SOCOP} (ours)  & $0.950 \pm 0.004$ & $11.929 \pm 0.907$  & $0.518 \pm 0.012$ \\
0.04 & \texttt{SOCOP} (ours)  & $0.950 \pm 0.004$ & $10.687 \pm 0.818$  & $0.531 \pm 0.012$ \\
0.05 & \texttt{SOCOP} (ours)  & $0.950 \pm 0.004$ & $9.976 \pm 0.750$   & $0.543 \pm 0.012$ \\
0.06 & \texttt{SOCOP} (ours)  & $0.950 \pm 0.004$ & $9.448 \pm 0.682$   & $0.552 \pm 0.012$ \\
0.07 & \texttt{SOCOP} (ours)  & $0.950 \pm 0.004$ & $9.104 \pm 0.634$   & $0.560 \pm 0.012$ \\
0.08 & \texttt{SOCOP} (ours)  & $0.950 \pm 0.004$ & $8.830 \pm 0.599$   & $0.568 \pm 0.011$ \\
0.09 & \texttt{SOCOP} (ours)  & $0.950 \pm 0.004$ & $8.649 \pm 0.577$   & $0.574 \pm 0.011$ \\
0.10 & \texttt{SOCOP} (ours)  & $0.950 \pm 0.004$ & $8.493 \pm 0.561$   & $0.580 \pm 0.012$ \\
0.20 & \texttt{SOCOP} (ours)  & $0.950 \pm 0.005$ & $7.668 \pm 0.525$   & $0.622 \pm 0.013$ \\
0.30 & \texttt{SOCOP} (ours)  & $0.950 \pm 0.005$ & $7.386 \pm 0.494$   & $0.647 \pm 0.013$ \\
0.40 & \texttt{SOCOP} (ours)  & $0.950 \pm 0.005$ & $7.245 \pm 0.505$   & $0.665 \pm 0.014$ \\
0.50 & \texttt{SOCOP} (ours)  & $0.950 \pm 0.005$ & $7.182 \pm 0.503$   & $0.677 \pm 0.014$ \\
0.60 & \texttt{SOCOP} (ours)  & $0.950 \pm 0.005$ & $7.148 \pm 0.513$   & $0.688 \pm 0.015$ \\
0.70 & \texttt{SOCOP} (ours)  & $0.950 \pm 0.005$ & $7.135 \pm 0.510$   & $0.697 \pm 0.015$ \\
0.80 & \texttt{SOCOP} (ours)  & $0.950 \pm 0.005$ & $7.111 \pm 0.519$   & $0.704 \pm 0.016$ \\
0.90 & \texttt{SOCOP} (ours)  & $0.950 \pm 0.005$ & $7.108 \pm 0.512$   & $0.711 \pm 0.016$ \\
1.00 & \texttt{SOCOP} (ours)  & $0.950 \pm 0.005$ & $7.104 \pm 0.509$   & $0.717 \pm 0.016$ \\
$\infty$ & \texttt{Least Ambiguous Sets}  & $0.950 \pm 0.005$ & $6.810 \pm 0.492$  & $0.787 \pm 0.016$ \\
\bottomrule
\end{tabular}
\end{table}

\begin{table}
\centering
\caption{ Performance of \texttt{Swin-v2-b} on ImageNet-V2 with different $\lambda$ values ($\alpha=0.05$). Results are averaged over 100 data splits.}
\label{tab:swinv2b_lambda_imgv2}
 \vspace{-1em}
\begin{tabular}{c c c c c}
\toprule
$\lambda$ & Method & Coverage & Avg Size & $P(\textnormal{size}>1)$ \\
\midrule
0 & \texttt{Pure Singleton}        & $0.950 \pm 0.005$ & $414.604 \pm 13.283$ & $0.414 \pm 0.013$ \\
0.01 & \texttt{SOCOP} (ours)        & $0.951 \pm 0.004$ & $20.127 \pm 1.710$   & $0.478 \pm 0.011$ \\
0.02 & \texttt{SOCOP} (ours)        & $0.950 \pm 0.005$ & $13.318 \pm 1.226$   & $0.501 \pm 0.013$ \\
0.03 & \texttt{SOCOP} (ours)        & $0.950 \pm 0.005$ & $11.647 \pm 1.024$   & $0.522 \pm 0.014$ \\
0.04 & \texttt{SOCOP} (ours)        & $0.950 \pm 0.004$ & $10.768 \pm 0.875$   & $0.537 \pm 0.013$ \\
0.05 & \texttt{SOCOP} (ours)        & $0.950 \pm 0.004$ & $10.036 \pm 0.794$   & $0.547 \pm 0.013$ \\
0.06 & \texttt{SOCOP} (ours)        & $0.950 \pm 0.004$ & $9.507 \pm 0.706$    & $0.556 \pm 0.013$ \\
0.07 & \texttt{SOCOP} (ours)        & $0.950 \pm 0.004$ & $9.113 \pm 0.620$    & $0.564 \pm 0.013$ \\
0.08 & \texttt{SOCOP} (ours)        & $0.950 \pm 0.004$ & $8.834 \pm 0.548$    & $0.571 \pm 0.012$ \\
0.09 & \texttt{SOCOP} (ours)        & $0.950 \pm 0.004$ & $8.640 \pm 0.548$    & $0.579 \pm 0.013$ \\
0.10 & \texttt{SOCOP} (ours)        & $0.950 \pm 0.004$ & $8.450 \pm 0.529$    & $0.585 \pm 0.013$ \\
0.20 & \texttt{SOCOP} (ours)        & $0.950 \pm 0.004$ & $7.625 \pm 0.596$    & $0.624 \pm 0.015$ \\
0.30 & \texttt{SOCOP} (ours)        & $0.950 \pm 0.005$ & $7.337 \pm 0.583$    & $0.646 \pm 0.016$ \\
0.40 & \texttt{SOCOP} (ours)        & $0.950 \pm 0.004$ & $7.183 \pm 0.570$    & $0.662 \pm 0.016$ \\
0.50 & \texttt{SOCOP} (ours)        & $0.950 \pm 0.005$ & $7.093 \pm 0.563$    & $0.674 \pm 0.017$ \\
0.60 & \texttt{SOCOP} (ours)        & $0.950 \pm 0.005$ & $7.049 \pm 0.568$    & $0.684 \pm 0.017$ \\
0.70 & \texttt{SOCOP} (ours)        & $0.950 \pm 0.005$ & $7.004 \pm 0.551$    & $0.692 \pm 0.017$ \\
0.80 & \texttt{SOCOP} (ours)        & $0.950 \pm 0.005$ & $6.971 \pm 0.549$    & $0.699 \pm 0.017$ \\
0.90 & \texttt{SOCOP} (ours)        & $0.950 \pm 0.005$ & $6.954 \pm 0.538$    & $0.705 \pm 0.017$ \\
1.00 & \texttt{SOCOP} (ours)        & $0.950 \pm 0.005$ & $6.929 \pm 0.527$    & $0.710 \pm 0.017$ \\
$\infty$ & \texttt{Least Ambiguous Sets}      & $0.950 \pm 0.004$ & $6.673 \pm 0.472$    & $0.777 \pm 0.017$ \\
\bottomrule
\end{tabular}
\end{table}

\begin{table}
\centering
\caption{ Performance of \texttt{ViT-h-14} on ImageNet-V2 with different $\lambda$ values ($\alpha=0.05$). Results are averaged over 100 data splits.}
\label{tab:vith14_lambda_imgv2}
\begin{tabular}{c c c c c}
\toprule
$\lambda$ & Method & Coverage & Avg Size & $P(\textnormal{size}>1)$ \\
\midrule
0.00 & \texttt{Pure Singleton}        & $0.950 \pm 0.004$ & $304.159 \pm 13.851$ & $0.304 \pm 0.014$ \\
0.01 & \texttt{SOCOP} (ours)        & $0.950 \pm 0.004$ & $6.672 \pm 0.461$    & $0.323 \pm 0.012$ \\
0.02 & \texttt{SOCOP} (ours)        & $0.950 \pm 0.004$ & $4.803 \pm 0.258$    & $0.336 \pm 0.011$ \\
0.03 & \texttt{SOCOP} (ours)        & $0.950 \pm 0.004$ & $4.145 \pm 0.213$    & $0.346 \pm 0.011$ \\
0.04 & \texttt{SOCOP} (ours)        & $0.950 \pm 0.004$ & $3.747 \pm 0.200$    & $0.352 \pm 0.012$ \\
0.05 & \texttt{SOCOP} (ours)        & $0.950 \pm 0.004$ & $3.493 \pm 0.186$    & $0.357 \pm 0.012$ \\
0.06 & \texttt{SOCOP} (ours)        & $0.950 \pm 0.004$ & $3.329 \pm 0.176$    & $0.361 \pm 0.013$ \\
0.07 & \texttt{SOCOP} (ours)        & $0.950 \pm 0.004$ & $3.223 \pm 0.162$    & $0.367 \pm 0.013$ \\
0.08 & \texttt{SOCOP} (ours)        & $0.950 \pm 0.005$ & $3.147 \pm 0.160$    & $0.372 \pm 0.013$ \\
0.09 & \texttt{SOCOP} (ours)        & $0.950 \pm 0.005$ & $3.084 \pm 0.165$    & $0.377 \pm 0.014$ \\
0.10 & \texttt{SOCOP} (ours)        & $0.950 \pm 0.005$ & $3.027 \pm 0.167$    & $0.381 \pm 0.015$ \\
0.20 & \texttt{SOCOP} (ours)        & $0.950 \pm 0.005$ & $2.743 \pm 0.142$    & $0.411 \pm 0.016$ \\
0.30 & \texttt{SOCOP} (ours)        & $0.950 \pm 0.005$ & $2.636 \pm 0.141$    & $0.429 \pm 0.016$ \\
0.40 & \texttt{SOCOP} (ours)        & $0.950 \pm 0.005$ & $2.575 \pm 0.141$    & $0.441 \pm 0.017$ \\
0.50 & \texttt{SOCOP} (ours)        & $0.950 \pm 0.005$ & $2.539 \pm 0.135$    & $0.450 \pm 0.017$ \\
0.60 & \texttt{SOCOP} (ours)        & $0.951 \pm 0.005$ & $2.516 \pm 0.127$    & $0.458 \pm 0.017$ \\
0.70 & \texttt{SOCOP} (ours)        & $0.950 \pm 0.005$ & $2.496 \pm 0.123$    & $0.464 \pm 0.017$ \\
0.80 & \texttt{SOCOP} (ours)        & $0.950 \pm 0.005$ & $2.480 \pm 0.117$    & $0.469 \pm 0.016$ \\
0.90 & \texttt{SOCOP} (ours)        & $0.950 \pm 0.005$ & $2.471 \pm 0.116$    & $0.474 \pm 0.016$ \\
1.00 & \texttt{SOCOP} (ours)        & $0.950 \pm 0.005$ & $2.461 \pm 0.116$    & $0.478 \pm 0.016$ \\
$\infty$ & \texttt{Least Ambiguous Sets}      & $0.950 \pm 0.005$ & $2.378 \pm 0.105$    & $0.539 \pm 0.018$ \\
\bottomrule
\end{tabular}
\end{table}

\subsection{CPL Method}
\label{app:cpl_results}

We report the performance of the \texttt{CPL} method \citep{kiyani2024length} under the same experimental protocol as in the main text. Following \cite{kiyani2024length}, we implement $\mathcal{H}$ as a linear head on top of the pre-trained model, mapping the final hidden-layer representations to a real-valued scalar. The results, shown in Table~\ref{tab:cpl_results}, indicate that this method exhibits slight undercoverage, while attaining similar performance to \texttt{Least Ambiguous Sets}.
Notably, these results are different from the ones reported by \citep{kiyani2024length},
 where the \texttt{CPL} method reduced average set sizes.
 However, the experimental settings considered in the two papers are different, which may explain the experimental differences. 
 In particular, their results use older large language models which perform quite poorly on MMLU, such that the original average set sizes are very large, being for instance equal to approximately 3.5 out of 4 in one example. 
 This leaves ample opportunity for improving the set sizes by the \texttt{CPL} method. In contrast, in our setting, the language models have a higher performance (leading to smaller set sizes with the default least ambiguous set sizes method, around 2.5 out of 4), which may leave less opportunity for improvement. 

\begin{table}
\centering
\caption{  Performance of \texttt{CPL} \citep{kiyani2024length} on ImageNet-Val, ImageNet-V2, TissueMNIST and MMLU with the same protocol used ($\alpha=0.05$).}
\label{tab:cpl_results}
\begin{subtable}{\linewidth}
\centering

\begin{tabular}{lccc}
\hline
Model & Coverage & Avg Size & $P(\textnormal{size}>1)$ \\
\hline
\texttt{ResNet152-v2}         & $0.950 \pm 0.002$ & $2.297 \pm 0.059$ & $0.463 \pm 0.006$ \\
\texttt{EfficientNet-v2-l} & $0.949 \pm 0.003$ & $1.542 \pm 0.044$ & $0.327 \pm 0.011$ \\
\texttt{ConvNeXt-base}     & $0.948 \pm 0.003$ & $1.866 \pm 0.050$ & $0.392 \pm 0.011$ \\
\texttt{Swin-v2-b}         & $0.948 \pm 0.003$ & $1.841 \pm 0.039$ & $0.386 \pm 0.009$ \\
\texttt{ViT-h-14}          & $0.949 \pm 0.004$ & $1.292 \pm 0.030$ & $0.221 \pm 0.017$ \\
\hline
\end{tabular}
\caption{  ImageNet-Val}
\end{subtable}


\begin{subtable}{\linewidth}
\centering

\begin{tabular}{lccc}
\hline
Model & Coverage & Avg Size & $P(\textnormal{size}>1)$ \\
\hline
\texttt{ResNet152-v2}         & $0.950 \pm 0.006$ & $9.295 \pm 1.167$ & $0.797 \pm 0.016$ \\
\texttt{EfficientNet-v2-l} & $0.940 \pm 0.006$ & $3.394 \pm 0.400$ & $0.675 \pm 0.017$ \\
\texttt{ConvNeXt-base}     & $0.949 \pm 0.005$ & $6.677 \pm 0.615$ & $0.773 \pm 0.019$ \\
\texttt{Swin-v2-b}         & $0.949 \pm 0.005$ & $6.493 \pm 0.673$ & $0.764 \pm 0.024$ \\
\texttt{ViT-h-14}          & $0.948 \pm 0.005$ & $2.400 \pm 0.154$ & $0.489 \pm 0.019$ \\
\hline
\end{tabular}
\caption{  ImageNet-V2}
\end{subtable}


\begin{subtable}{\linewidth}
\centering

\begin{tabular}{lccc}
\hline
Model & Coverage & Avg Size & $P(\textnormal{size}>1)$ \\
\hline
\texttt{ResNet-50 (224)} & $0.950 \pm0.003$ & $2.640 \pm0.040$ & $0.791 \pm0.008$ \\
\hline
\end{tabular}
\caption{  TissueMNIST}
\end{subtable}


\begin{subtable}{\linewidth}
\centering

\begin{tabular}{lccc}
\hline
Model & Coverage & Avg Size & $P(\textnormal{size}>1)$ \\
\hline
\texttt{Llama3.1-8B-Instruct} & $0.948 \pm 0.006$ & $2.400 \pm 0.046$ & $0.644 \pm 0.012$ \\
\hline
\end{tabular}
\caption{  MMLU}
\end{subtable}
\end{table}

\section{Hyperparameter Grid}
\label{app:lambda-grid}
For \texttt{RAPS}, we 
follow \cite{angelopoulos2021uncertainty},  using the gird $\lambda \in\{0.001,0.01,0.1,0.2,0.5\}$ to optimize set size and a grid with smaller values $\lambda \in\{0.00001,0.0001,0.0008,0.001,0.0015,0.002\}$ to optimize SSCV. For our \texttt{SOCOP} method, 
we use a linearly spaced grid of 15 values over over [0.05, 1.0] to optimize the balance between set size and non-singleton rate, and a linearly spaced grid of 15 values over $[0.005, 0.1]$ to optimize SSCV.

\end{document}